\def\eqref#1{equation~\ref{#1}}
\def\1{\bm{1}}
\DeclareMathAlphabet{\mathsfit}{\encodingdefault}{\sfdefault}{m}{sl}
\SetMathAlphabet{\mathsfit}{bold}{\encodingdefault}{\sfdefault}{bx}{n}
\newcommand{\E}{\mathbb{E}}
\newcommand{\PMI}{\operatorname{PMI}}
\newcommand{\LLM}{\operatorname{LLM}}
\newcommand{\GEM}{\operatorname{GEM}}
\newcommand{\GEMS}{\operatorname{GEM-S}}
\newcommand{\GEMraw}{\operatorname{GEM-raw}}
\title{Benchmarking LLMs' Judgments with No Gold Standard}
\author{Shengwei Xu\thanks{Both authors contributed equally to the paper.}\enspace\thanks{Supported by United States National Science Foundation award number 2313137.}\\
School of Information\\
University of Michigan, Ann Arbor, USA\\
\texttt{shengwei@umich.edu}
\And
Yuxuan Lu\footnotemark[1]\enspace\thanks{Supported by National Natural Science Foundation of China award number 62372007.}\\
School of Computer Science\\
Peking University, Beijing, China\\
\texttt{yx\_lu@pku.edu.cn}
\And
Grant Schoenebeck\footnotemark[2]\\
School of Information\\
University of Michigan, Ann Arbor, USA\\
\texttt{schoeneb@umich.edu}\hspace{13em}
\And
Yuqing Kong\footnotemark[3]\\
School of Computer Science\\
Peking University, Beijing, China\\
\texttt{yuqing.kong@pku.edu.cn}
}
\begin{document}

\maketitle

\begin{abstract}

We introduce the GEM (Generative Estimator for Mutual Information), an evaluation metric for assessing language generation by Large Language Models (LLMs), particularly in generating informative judgments, without the need for a gold standard reference. GEM broadens the scenarios where we can benchmark LLM generation performance-from traditional ones, like machine translation and summarization, where gold standard references are readily available, to subjective tasks without clear gold standards, such as academic peer review.

GEM uses a generative model to estimate mutual information between candidate and reference responses, without requiring the reference to be a gold standard. In experiments on a human-annotated dataset, GEM demonstrates competitive correlations with human scores compared to the state-of-the-art GPT-4o Examiner, and outperforms all other baselines. Additionally, GEM is more robust against strategic manipulations, such as rephrasing or elongation, which can artificially inflate scores under a GPT-4o Examiner. 

We also present GRE-bench (Generating Review Evaluation Benchmark) which evaluates LLMs based on how well they can generate high-quality peer reviews for academic research papers.  Because GRE-bench is based upon GEM, it inherits its robustness properties. Additionally, GRE-bench circumvents data contamination problems (or data leakage) by using the continuous influx of new open-access research papers and peer reviews each year. We show GRE-bench results of various popular LLMs on their peer review capabilities using the ICLR2023 dataset.

\end{abstract}

\section{Introduction}

High-quality and reliable Large Language Model (LLM) benchmarks can effectively guide research, encourage innovation, monitor their advancement, and inform users of which model to use for their purpose.  The importance of the final goal is underscored by the over 900k models currently available on Hugging Face, an online platform for open-source LLMs\footnote{https://huggingface.co/docs/hub/en/index}. Various benchmarks are proposed for evaluating LLMs' ability in different aspects, including ARC \citep{chollet2019measure}, HellaSwag \citep{zellers2019hellaswag}, Massive Multitask Language Understanding (MMLU) \citep{hendrycks2020measuring}, MT Bench \citep{zheng2023judging}, GSM8K \citep{cobbe2021training}, TruthfulQA \citep{lin2021truthfulqa}, Natural Questions \citep{kwiatkowski2019natural}, etc.

Most benchmarks are based on multiple-choice questions or other questions with objective gold standard answers, since it is easy to verify the LLMs' outputs. While they provide valuable evaluation for LLMs, open-ended tasks, e.g. providing judgment about a research paper, encompass a broader array of skills and require both objective and subjective reasoning. In addition, concern has been raised about data contamination (also called data leakage), where the training data contains information about the tasks in the benchmarks, as the LLMs are pre-trained on massive online data, which is also the source of some of the benchmark tasks. LLMs can show unreliably good performance due to data contamination \citep{sainz2023nlp,golchin2023time,oren2023proving}. In contrast, with open-ended questions, LLMs can be asked to provide judgments about newly created content, e.g. the latest academic papers, for which LLMs have yet to index human evaluations or responses.

However, it is not clear how to automate the evaluation of subjective response quality.  An added challenge is that there is no gold standard quality response with which to compare.

We would like an evaluator to have two properties.  First, it should be \textbf{accurate} and be sensitive to the semantic content response. As current LLMs have already shown strong language ability, we want to focus on evaluating the semantic informativeness of candidate responses instead of their syntax or style. Second, the evaluator should be \textbf{manipulation-resistant}-we should not be able to manipulate a response in a trivial fashion to consistently increase evaluation scores. This is important because otherwise one cannot determine whether a high evaluation score indicates that the LLM output a high-quality response or merely results from manipulation designed to achieve an artificially inflated score. 

Given these gaps in previous research, our research question is: {\bf Can we develop accurate, manipulation-resistant, and automated evaluation metrics for textual responses with no gold standard reference to compare with?}

A straightforward approach may involve utilizing an alternative LLM as an oracle examiner to directly provide the evaluation, which has demonstrated efficacy in evaluating open-response QA \citep{bai2024benchmarking}, chatbot \citep{zheng2023chatbot}, etc. However, 
LLM examiners are susceptible to certain manipulations. In our experiments, elongating all responses by adding the same fixed sentences can significantly increase the GPT-4o LM examiner's score. Interestingly, previous research has shown that even human evaluations can be manipulated by adding meaningless text \citep{goldberg2023peer}.  
Meanwhile, other automated natural language generation (NLG) evaluation metrics, including BLEU \citep{papineni2002bleu}, ROUGE \citep{lin2004rouge}, BERTScore \citep{zhang2019bertscore}, BARTScore \citep{yuan2021bartscore} and GPTScore \citep{fu2023gptscore}, rely on comparison with a gold standard reference.

\subsection{Our Contributions}
We propose the \textit{Generative Estimator for Mutual information (GEM)} which uses the estimated Shannon mutual information (MI) between a set of candidate responses and a set of peer reference responses, which need not be of gold standard quality. The mutual information measures the amount of information revealed about the reference responses by obtaining the candidate responses.  By artificially removing stylistic and syntactical information, our approach measures how much semantic information the candidate responses can reveal about the reference responses, which is related to the concept of \textit{semantic coverage} \citep{yuan2021bartscore,nenkova2004evaluating}.  We additionally propose a variant of our method, 
 \textit{Generative Estimator for Mutual Information with Synopsis (GEM-S)}, which estimates the mutual information conditional on a synopsis of the task, e.g. the abstract of the paper.  This prevents a candidate response from receiving a high score based solely on superficial information.  Consequently, the score emphasizes the \emph{additional} semantic information gained from the candidate responses.
 
 For implementation, we utilize a generative language model to estimate the conditional distribution between two text responses. Thus, the GEM can be categorized as an LLM-probability-based metric, with techniques similar to BARTScore \citep{yuan2021bartscore} and GPTScore \citep{fu2023gptscore}, from which our metric inherits the effectiveness of evaluating objective tasks with a gold standard.

Our work establishes a bridge between the information theoretical framework in the literature of information elicitation without verification \citep{kong2019information,lu2024eliciting} and the NLG evaluation problem. Though GEM and GEM-S resemble the GPPM and GSPPM mechanisms in \citet{lu2024eliciting}, with manipulation resistance aligned to their incentive compatibility, we make a necessary change to make the score more suitable for the NLG evaluation problem. Specifically, for incentive compatibility, the score can rely solely on accuracy of predicting the peer reference, while we use mutual information to capture the gain in that accuracy for evaluation purposes. 

\paragraph{Results Overview} The results of experiments validate GEM's accuracy and resistance to manipulation, and compare it with many different NLG evaluation metrics. 

\begin{itemize}
    \item[\textbf{1.}] \textbf{Positive Correlation with Human Annotation} (Section~\ref{sec:result1}) On a human-annotated dataset, the GEM metrics, especially GEM-S, achieve a significant positive correlation with human-labeled quality scores and demonstrate competitive performance with GPT4-o Examiner while outperforming all other baseline metrics.
    \item[\textbf{2.}] \textbf{Better Sensitivity to Degradations} (Section~\ref{sec:result2}) Compared to various baseline metrics, GEM and GEM-S are the only metrics that demonstrate significant sensitivity to all semantic degradations in our experiment, by effectively penalizing degraded responses.
    \item[\textbf{3.}] \textbf{Better Robustness against Manipulations} (Section~\ref{sec:result3}) GEM and GEM-S are the only metrics that exhibit no significant score increases after meaningless elongation (Figure~\ref{fig:elongation}) and GPT-4o/Llama-3.1 rephrase, whereas LMExaminer show vulnerabilities by significantly increasing scores after all manipulations.
\end{itemize}

Building on GEM and GEM-S, we introduce the \textbf{GRE-bench} (Generating Review Evaluation Benchmark) to evaluate LLMs' peer review capabilities with data from open-access research papers and peer reviews. With the continuous influx of new data each year, GRE-bench can mitigate data contamination issues. In addition, since GRE-bench is based on GEM or GEM-S, it inherits their nice properties demonstrated in the experiments above.

We run GRE-bench for various state-of-the-art LLMs on the ICLR 2023 dataset, and present the results in Section~\ref{sec:benchmark}. We find a strong correlation between parameter size and GRE-bench score within LLM families, which further validates the effectiveness of GEM and GEM-S.

\subsection{Related Work}

\paragraph{LLMs in academic peer review.} Given the success of LLMs, they have the potential to assist with peer reviews when used appropriately \citep{robertson2023gpt4, kuznetsov2024can}. \citet{liang2023large} use a survey to evaluate GPT-4-generated reviews, and find that the GPT-generated reviews are thought helpful by more than 50\% of participants. However, there have been concerns regarding issues such as hallucinations \citep{donker2023dangers} and inconsistent performance \cite{liu2023reviewergpt}. While our study uses the peer review scenario to evaluate LLM capabilities, it also provides a quantitative comparison for the effectiveness of various LLMs in generating informative reviews.

\paragraph{Information Elicitation.} Both information elicitation and machine learning employ a principal-agent model \citep{ali1966general}. Specifically, the principal aims to elicit information from the agent, e.g. the probability of rain tomorrow. To incentivize the agent to provide truthful and informative reports, payment mechanisms are employed that reward truthful and informative reports more than untruthful or uninformative ones. 

When the information is verifiable, e.g., we will eventually know whether it rains tomorrow, \textit{proper scoring rules} \citep{good1952rational,brier1950verification,hendrickson1971proper} can be applied, which are similar to the loss functions in supervised learning. When the information is subjective and unverifiable, \citet{miller2005eliciting} propose the \textit{peer prediction} mechanism, suggesting rewarding data by how well it predicts a peer's subjective report which is judged by a proper scoring rule. \citet{prelec2004bayesian} propose the Bayesian Truth Serum mechanism. \citet{kong2019information} propose an information-theoretic framework, suggesting paying the agents according to the f-mutual information between their reports. The framework provides a unified theoretical view of previous mechanisms in \citet{dasgupta2013crowdsourced}, and \citet{prelec2004bayesian}. \citet{lu2024eliciting} first generalize the peer prediction to elicit informative textual responses with LLMs, providing the foundation for our GEM metrics.

Most information elicitation mechanisms theoretically ensure that truthful and informative reports yield higher expected scores. However, high ex-post correlation between scores and report qualities is critical for practical application, as highlighted by \citet{burrell2021measurement, xu2024spot}. In our study, to validate GEM's effectiveness in evaluating textual reports, we measure both the ex-post correlation with human-annotated quality scores and expected score change after several degradation and manipulation strategies.

\paragraph{Estimation of Mutual Information.} \citet{kong2018water} bridges information elicitation without verification and co-training, by formulating an optimization problem that maximizes the mutual information between two Bayesian predictors with two data sources. \citet{cao2019max,xu2019l_dmi} apply this information theoretical approach in co-training and de-noising respectively. Concurrently and independently, \citet{belghazi2018mine} and \citet{hjelm2018learning} propose neural network-based methods for estimating mutual information, which can be used as an optimization goal in unsupervised and self-supervised learning settings.

\section{Preliminaries}

In this section, we first introduce our model for benchmarking LLMs' judgments with no gold standard and then present Shannon mutual information (MI) and pointwise mutual information (PMI) as an unbiased estimator of MI. 

\paragraph{Model}

We consider a scenario with a \textbf{candidate} under evaluation and a peer \textbf{reference} working on the same tasks. We denote $\mathcal{W}$ as the space of all tasks, with an underlying distribution over it, denoted as  $\Delta\mathcal{W}$. Given a sample task $W \sim \Delta\mathcal{W}$, both the candidate and the reference generate a response, represented by the random variable $X$ and $Y$ respectively. The response space for both $X$ and $Y$ is denoted by $\Sigma$. The tuple $(W,X,Y)$ follows an underlying distribution $\mathcal{D}:= \Delta (\mathcal{W} \times \Sigma^2)$. \textit{Note that the reference's responses do not need to be gold standard.}  We assume that $X$ and $Y$ are independent conditioning on the task $W$.

\begin{figure}[!ht]
    \centering
    \includegraphics[width=0.55\linewidth]{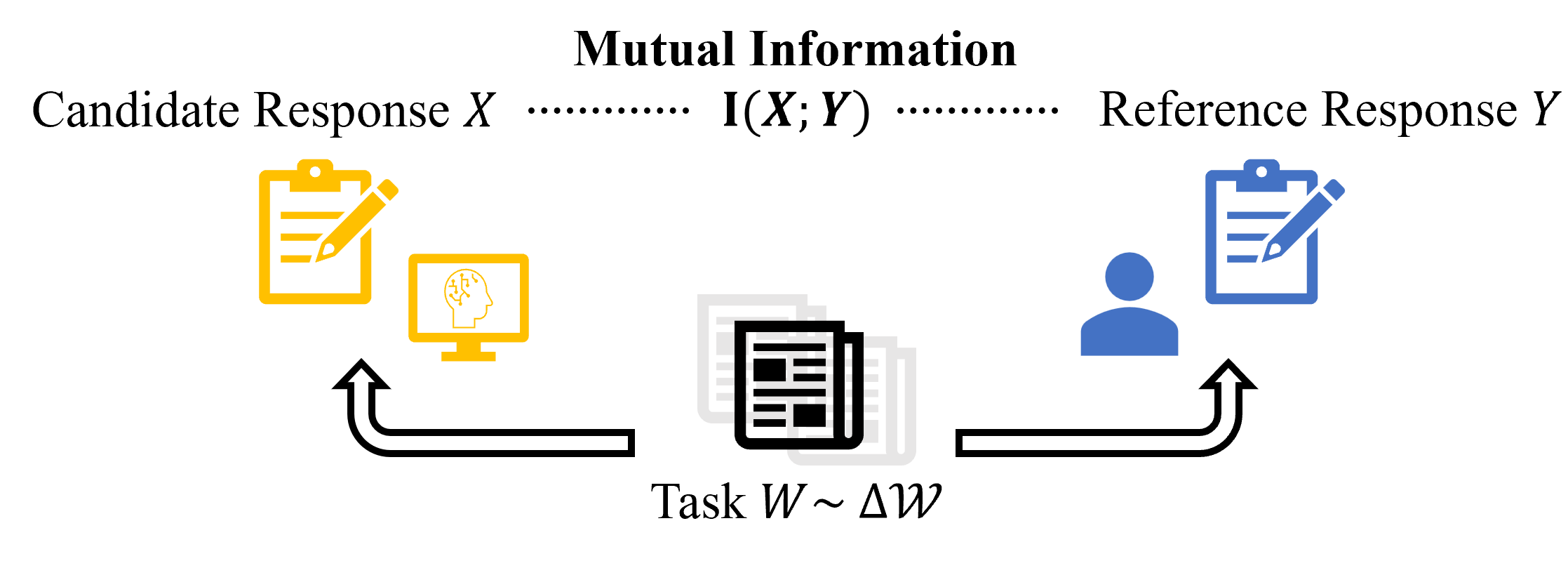}
    \caption{Our model}
    \label{fig:model}
\end{figure}
Empirically, in a dataset $D$, we have a list of sampled tasks $\mathbf{w} = \{w_1,\cdots,w_n\}$, and a list of corresponding reference responses $\mathbf{y} = \{y_1,\cdots,y_n\}$. Let the candidate generate a list of responses $\mathbf{x} = \{x_1,\cdots,x_n\}$ to the corresponding tasks. We adopt a common assumption that all the samples $(w_i, x_i, y_i)$ are generated i.i.d. following the underlying distribution $\mathcal{D}$.

\paragraph{Benchmarking the candidate} The \textit{evaluation metric} $f: \mathcal{W} \times \Sigma^2 \rightarrow \mathbb{R}$ maps a tuple $(w, x, y)$ to a \textit{score} $S$. Note that some evaluation metrics only utilize one or two of the items in the tuple. If the evaluation metric utilizes a language model to compute the score, we call the model used as the \textit{evaluation-LM}. Different evaluation-LMs may lead to different evaluation results. A nice property of our GRE-bench is the consistency over evaluation-LMs, i.e., using a smaller evaluation-LM will not significantly change the evaluation results, we will discuss it in Section~\ref{sec:benchmark}. Given a dataset $D$ and an evaluation metric $f$, we can estimate $\E_{(W,X,Y)\sim \mathcal{D}}[f(W,X,Y)]$, by using the average of the evaluation metrics over all samples, which is named the \textit{benchmark}.  

\paragraph{Shannon Mutual Information}\label{subsec:shannon-mi}
Ideally, we aim to score the candidate according to the informativeness of her responses, thus, Shannon Mutual Information (MI) \citep{shannon1948mathematical} is a suitable metric. The MI between two random variables, \( X \) and \( Y \), provides a quantitative measure of the information shared between them. Specifically, it assesses how much knowing one variable can inform us about the other. This is formally defined as: 
$\operatorname{I}(X;Y) = \sum_{x,y} \Pr[X=x, Y=y] \log \frac{\Pr[X=x, Y=y]}{\Pr[X=x]\Pr[Y=y]}.$ The conditional mutual information \( I(X;Y \mid Z) \) measures the mutual information between \( X \) and \( Y \), conditioned on a random variable \( Z \):
\[
\operatorname{I}(X;Y | Z) = \sum_{z}\Pr[Z=z]\sum_{x,y} \Pr[X=x, Y=y | Z=z] \log \frac{\Pr[X=x,Y=y | Z=z]}{\Pr[X=x | Z=z]\Pr[Y=y | Z=z]}.
\]
In an academic peer review scenario, $Z$ can be some superficial information of a paper (e.g. the abstract), the conditional MI measures how much information is revealed beyond the abstract and 
emphasizes the MI gained from the semantics judgment.

\paragraph{Point-wise Mutual Information as an Estimator}
To estimate the MI with samples, we employ Point-wise Mutual Information (PMI) \citep{fano1961transmission,church1990word}, an unbiased estimator of the MI. Formally, the PMI between realization $x$ and $y$ is defined as
\begin{align*}
\PMI(x;y) := & \log\frac{\Pr[X = x, Y = y]}{\Pr[X = x]\Pr[Y = y]} = \log \Pr[Y = y \mid X=x] - \log \Pr[Y = y].
\end{align*}
Similarly, the PMI between $x$ and $y$ conditional on $z$ is defined as
\begin{align*}
\PMI(x;y\mid z) = & \log \Pr[Y = y \mid X=x, Z=z] - \log \Pr[Y = y \mid Z=z].
\end{align*}

\section{Generative Estimator for Mutual Information (GEM)}

In this section, we propose the Generative Estimator for Mutual Information (GEM), prove its theoretical effectiveness even if the reference's responses are not gold standard under certain assumptions, and discuss its empirical practice.

As discussed in Section~\ref{subsec:shannon-mi}, we use the mean of sample PMIs as an unbiased estimator of the MI. Given a sampled tuple with a task $w_i$, a candidate response $x_i$ and a reference response $y_i$, we aim to compute $\PMI(x_i;y_i) = \log \Pr[Y = y \mid X=x] - \log \Pr[Y = y]$.

Following \citet{yuan2021bartscore, fu2023gptscore,lu2024eliciting}, we use a generative language model to estimate the probability distribution of the reference response conditional on the candidate's output, denoted by $\Pr_{\LLM}[Y = y \mid X=x]$, and the marginal distribution, denoted by $\Pr_{\LLM}[Y = y]$.

Specifically, we use the LLMs' inherent token prediction function. We tokenize $y$ as $y=y^{(1)}y^{(2)}\cdots y^{(|y|)}$, and then integrate $x$ into the prompt. Given any length-$k$ prefix $y^{(1)}y^{(2)}\cdots y^{(k)}$, we use the LLM to predict $\Pr_{\LLM}[Y^{(k+1)}=y^{(k+1)} \mid Y^{(1)}\cdots Y^{(k)} = y^{(1)}\cdots y^{(k)}, X=x]$. With Bayesian rule, by multiplying over $k$, we have 
\[
\Pr_{\LLM}[Y = y \mid X=x] = \prod_{k=0}^{|y|} \Pr_{\LLM}[Y^{(k+1)}=y^{(k+1)} \mid Y^{(1)}\cdots Y^{(k)} = y^{(1)}\cdots y^{(k)}, X=x].
\]
Similarly, we can estimate the marginal distribution $\log \Pr_{\LLM}[Y = y]$. Thus, we have the estimated PMI, denoted as $\widehat{\PMI}(x_i;y_i) = \log \Pr_{\LLM}[Y = y \mid X=x] - \log \Pr_{\LLM}[Y = y]$, and consequently, the estimated MI over dataset $D$ is
\(
\widehat{I}(X;Y) = \frac{1}{n} \sum_{i=1}^{n} \widehat{\PMI}(x_i;y_i).
\)

\subsection{Theoretical Guarantees: Benchmarking LLMs with No Gold Standard}

We now show that even when the reference is not gold standard, the estimated MI can still be a benchmark where better candidates achieve higher scores theoretically. We adopt a widely-used model in decision-making theory \citep{blackwell1951comparison,blackwell1953equivalent}. The candidate's response follows an \textit{information structure}, a mapping $\sigma : \mathcal{W} \rightarrow \Delta \Sigma$, where $\Sigma$ is the response space. It represents the distribution of responses conditional on the task.

Within this model, Blackwell's order \citep{blackwell1951comparison,blackwell1953equivalent} provides a partial order of the informativeness of information structures. Consider two candidates, H and L, whose responses $X_H$ and $X_L$ follow information structures $\sigma_H$ and $\sigma_L$ respectively. Information structure $\sigma_H$ is more informative than $\sigma_L$ in the sense of Blackwell order if there exists a stochastic mapping $\Gamma$, such that $\sigma_L = \Gamma \sigma_H$. Intuitively, the information structure with a lower Blackwell order is more noisy, and thus, can be regarded as lower quality.

We now show that higher Blackwell order information leads to (approximately) higher GEM scores when the LLM can provide a ``good'' estimation. Formally, we have

\begin{restatable}{proposition}{propGEM}\label{prop:GEM}
When the KL-divergence\footnote{The KL-divergence between two distributions over the same probability space is $D_{\text{KL}}(P \| Q) = \sum_{x} P(x) \log \left(P(x)/Q(x)\right).$} between the LLM estimated distribution and the underlying distribution satisfies
\[
D_{KL}\left[\Pr_{\LLM}[Y=\cdot \mid X=x_H] \Big\| \Pr[Y=\cdot \mid X_H=x_H]\right] < \epsilon, ~\forall x_H.
\]
For the two candidates $H$ and $L$ discussed above, the information structure of $H$ Blackwell dominates $L$'s, when the size of dataset $n$ goes to infinity, we have 
\(
\widehat{I}(X_H;Y) \geq \widehat{I}(X_L;Y) - \epsilon
\).
\end{restatable}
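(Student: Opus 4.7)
The plan is to reduce the claim to the data processing inequality for the true mutual information, with the LLM's estimation error absorbed into the $\epsilon$ slack. First, since the samples $(W_i,X_i,Y_i)$ are i.i.d.\ draws from $\mathcal{D}$, the strong law of large numbers gives that $\widehat{I}(X;Y) = \frac{1}{n}\sum_i \widehat{\PMI}(x_i;y_i)$ converges almost surely to $\E_{(X,Y)\sim\mathcal{D}}[\widehat{\PMI}(X;Y)]$ as $n\to\infty$. It therefore suffices to establish the desired inequality for these limiting expectations.

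The key algebraic step is to decompose the expected LLM-based PMI in terms of the true mutual information plus KL-divergence corrections. Writing $\log \Pr_{\LLM}[Y\mid X] = \log \Pr[Y\mid X] + \log\bigl(\Pr_{\LLM}[Y\mid X]/\Pr[Y\mid X]\bigr)$, and analogously for the marginal, then taking expectation under the true joint distribution, yields the identity
\[
\E[\widehat{\PMI}(X;Y)] = I(X;Y) \;-\; \E_{X}\bigl[D_{KL}\bigl(\Pr[Y\!\mid\! X]\,\|\,\Pr_{\LLM}[Y\!\mid\! X]\bigr)\bigr] \;+\; D_{KL}\bigl(\Pr[Y]\,\|\,\Pr_{\LLM}[Y]\bigr).
\]
Applying this identity to both $X_H$ and $X_L$ and subtracting, the marginal-KL term cancels because the reference distribution $\Pr[Y]$ and its LLM estimate $\Pr_{\LLM}[Y]$ do not depend on which candidate is being evaluated.

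I would then invoke two facts to finish. (i) \emph{Blackwell dominance}: the garbling $\Gamma$ with $\sigma_L = \Gamma\sigma_H$ makes $Y - X_H - X_L$ a Markov chain, so the data processing inequality gives $I(X_H;Y) \geq I(X_L;Y)$. (ii) \emph{The KL hypothesis}: the residual term $\E_{X_H}[D_{KL}(\cdot)]$ is bounded by $\epsilon$ uniformly in $x_H$. Dropping the non-negative $\E_{X_L}[D_{KL}(\cdot)]$ term on the $L$-side then gives $\widehat{I}(X_H;Y) \geq \widehat{I}(X_L;Y) - \epsilon$, as claimed.

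The main technical obstacle is the direction of the KL divergence: the decomposition above naturally involves $D_{KL}(\Pr\,\|\,\Pr_{\LLM})$, whereas the statement bounds $D_{KL}(\Pr_{\LLM}\,\|\,\Pr)$. I would either adopt the ``natural'' direction in the hypothesis, which is the forward/cross-entropy direction that language-model training minimizes, or, to keep the statement as written, bridge the two directions via a bounded-density-ratio assumption so that the two KL quantities are within a constant factor of each other. The other minor subtlety is the quantitative rate at which $\widehat{I}\to \E[\widehat{\PMI}]$ as $n\to\infty$; since the statement only needs the limit, it suffices to cite the strong law and take the limit after the decomposition, without controlling finite-sample fluctuations.
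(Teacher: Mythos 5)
Your proposal is essentially the same argument as the paper's proof, just organized slightly differently. Both proceed by (i) invoking the law of large numbers to pass from the empirical average $\widehat{I}$ to the population expectation, (ii) observing that the marginal term $\frac{1}{n}\sum_i\log\Pr_{\LLM}[Y=y_i]$ is common to $H$ and $L$ and so cancels, (iii) bounding the cross-entropy $\E[\log\Pr_{\LLM}[Y\mid X_H]]$ from below using the KL hypothesis, (iv) bounding $\E[\log\Pr_{\LLM}[Y\mid X_L]]$ from above using the non-negativity of the residual KL (the paper phrases this as the log score being proper), and (v) applying the data processing inequality to the Markov chain induced by the garbling $\sigma_L=\Gamma\sigma_H$. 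You state the exact decomposition
$\E[\widehat{\PMI}(X;Y)] = I(X;Y) - \E_X D_{KL}(\Pr[Y\mid X]\,\|\,\Pr_{\LLM}[Y\mid X]) + D_{KL}(\Pr[Y]\,\|\,\Pr_{\LLM}[Y])$
explicitly up front, whereas the paper spreads the same identity across two chains of inequalities; this is a cosmetic difference, but your version makes it more transparent which correction term is bounded, which cancels, and which is simply dropped.

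The one substantive thing you add, and you should be commended for it, is that the KL-direction issue you flag is genuinely present in the paper's own proof. The step the paper annotates with ``This step follows the condition of bounded KL-divergence'' requires
$\sum_y \Pr[Y=y\mid X_H=x_H]\log\frac{\Pr_{\LLM}[Y=y\mid X=x_H]}{\Pr[Y=y\mid X_H=x_H]} \geq -\epsilon,$
which is precisely $D_{KL}\bigl(\Pr[Y\mid X_H=x_H]\,\|\,\Pr_{\LLM}[Y\mid X=x_H]\bigr)\le\epsilon$, the \emph{forward} direction. But the proposition's hypothesis is written in the \emph{reverse} direction, $D_{KL}(\Pr_{\LLM}\|\Pr)<\epsilon$. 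Since KL is not symmetric these are not equivalent, so either the hypothesis should be restated with the true distribution first (your ``natural'' fix, which is also what language-model cross-entropy training actually controls), or one needs an additional bounded-density-ratio assumption to bridge the two. Your proposal correctly identifies this and offers both repairs; the paper glosses over it. Your remark about quantitative convergence rates is correct but unnecessary, as you say, since the proposition only asserts the $n\to\infty$ limit.
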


The proof is provided in Appendix~\ref{app:proofs}, following previous information elicitation literature \citep{lu2024eliciting}. It mainly relies on the data processing inequality \citep{shannon1948mathematical,thomas2006elements}. Additionally, we provide a similar proposition for conditional MI in Appendix~\ref{app:proofs}.

We highlight that while this theoretical result requires a strong assumption, it gives intuitions of why and how our GEM scores could work. In the next several sections, we present our experiment validating the effectiveness of GEM with real data.

\subsection{Filtering Out Shortcuts}\label{subsec:shortcut}

Textual data is inherently high-dimensional, encompassing various aspects. For example, peer reviews may have a hierarchical information structure, including language style, the topic of the paper, and critical judgments, as shown in Figure~\ref{fig:hierarchy}. While each of these dimensions influences LLM predictions and consequently contributes to the mutual information, they are not equally important when assessing the quality of LLM-generated responses, particularly in the context of peer reviews. 

Empirically, the correlation between superficial information can act as ``shortcuts'' \citep{geirhos2020shortcut} confounding LLM predictions, cause unintended bias in the estimated mutual information, and consequently distort the evaluation metric towards superficial correlations rather than thorough judgments. Since we focus on the semantic quality of these responses, we aim to filter out dimensions like language styles that could skew the evaluation metric by offering ``shortcuts'' in LLM predictions.

\begin{figure}[!ht]
    \centering
    \includegraphics[width=0.7\linewidth]{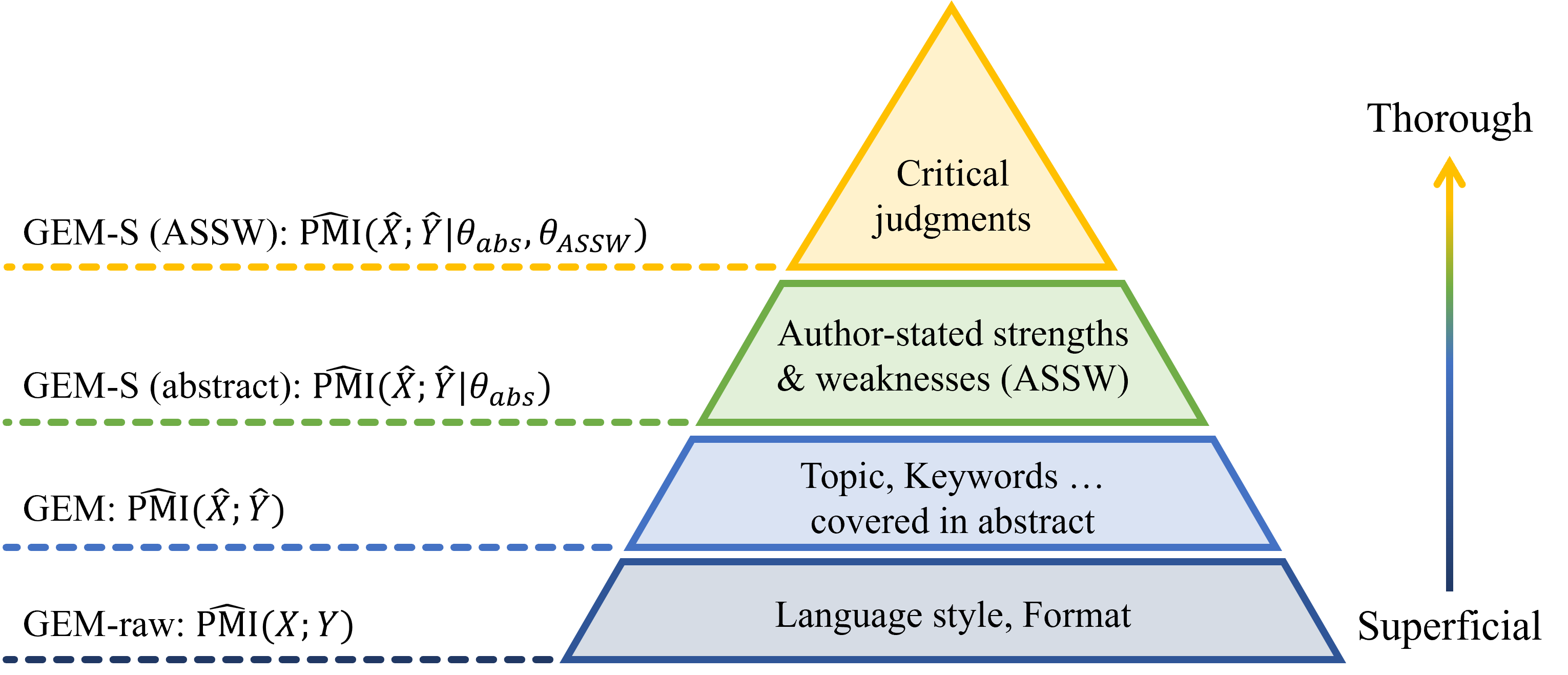}
    \caption{Example of Hierarchical Information Structure in Peer Reviews}
    \label{fig:hierarchy}
\end{figure}

\paragraph{Preprocessing}

A straightforward yet effective method proposed by \citet{lu2024eliciting}, is preprocessing responses using a specific LLM to rephrase and summarize the content. This preprocessing step standardizes language style and eliminates superficial information, thereby reducing the influence of irrelevant dimensions. We denote the preprocessed versions of responses $x_i$ and $y_i$ as $\hat{x_i}$ and $\hat{y_i}$ respectively.

\begin{figure}[!ht]
    \centering
    \includegraphics[width=0.98\linewidth]{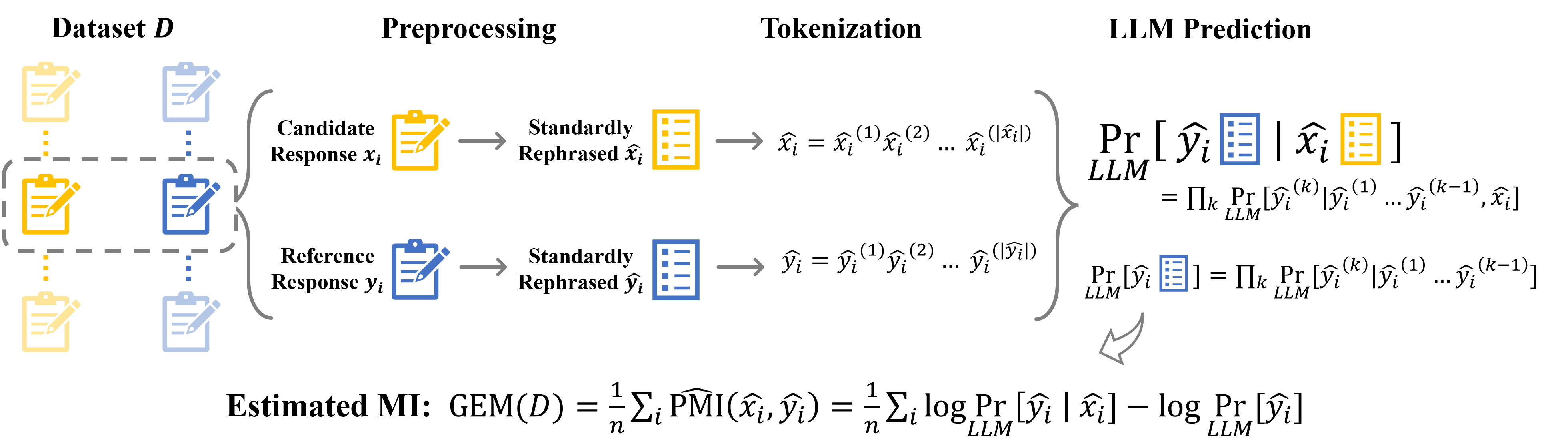}
    \caption{An Overview of Our Generative Estimator for Mutual Information}
    \label{fig:GEMI}
\end{figure}

We have now outlined all steps of our Generative Estimator for Mutual Information (GEM) framework, as illustrated in Figure~\ref{fig:GEMI}. The \textbf{GEM} metric between a candidate response $x_i$ and a reference response $y_i$ is defined as follows:
\[\GEM(x_i,y_i) := \widehat{\PMI}(\hat{x_i},\hat{y_i}).\]

By taking the arithmetic average of all $\GEM(x_i,y_i)$ over the dataset $D$, we have an estimated mutual information between random variables $X$ and $Y$, as shown in Figure~\ref{fig:GEMI}.

For comparison, we also introduce the \textbf{GEM-raw} metric calculated without preprocessing as follows: $\GEMraw(x_i,y_i) := \widehat{\PMI}(x_i,y_i)$, which serves as a baseline.

\paragraph{GEM-S: conditioning out the synopsis}

\citet{kong2018eliciting} suggest surface-level information can inflate mutual information through simple correlations, reducing the mutual information's effectiveness in reflecting semantic informativeness, especially when the responses have a hierarchical information structure as shown in Figure~\ref{fig:hierarchy}. As some superficial information can be contained in a synopsis of the task, e.g. the abstract of the paper to be reviewed, conditional mutual information can be used to distill the semantic informativeness \citep{lu2024eliciting}.

The GEM-S metric is designed to mitigate this issue by conditioning out the synopsis of the task, denoted as $\theta(w_i)$. The \textbf{GEM-S} metric is formally defined as: 
\[\GEMS(\theta(w_i),x_i,y_i) = \widehat{\PMI}(\hat{x_i},\hat{y_i}\mid \theta(w_i)).\]

With the conditional formulation above, only the additional, non-trivial mutual information is measured, thereby offering a more accurate assessment of the candidate's performance in providing information beyond superficial content.

Moreover, we can adjust the granularity of the synopsis to make the GEM-S metric focus on evaluating different aspects of the candidate LLMs' ability. In the example of Figure~\ref{fig:hierarchy}, the standard GEM measures the general ability to produce relevant reviews. By conditioning on the abstract, GEM-S (abstract) evaluates the LLM's ability to generate judgments, including retrieving author-stated strengths and weaknesses (ASSW) and providing critical feedback. By further conditioning on a summary of these author-stated strengths and weaknesses, GEM-S (ASSW) narrows the focus to the LLM's capacity for critical thinking and delivering constructive feedback.

\section{Experiment: Validating GEM's Effectiveness}\label{sec:validating}

In this section, we provide our experiment setup to validate the effectiveness of the GEM metric in benchmarking LLMs without gold-standard references, and show the results. Code repository is at \url{https://github.com/yx-lu/Benchmarking-LLMs--Judgments-with-No-Gold-Standard}.

\paragraph{GEM setup}
We present the empirical setup of the GEM metric and its variants. In the validation experiment in this section, Llama-3.1 8B is used as the evaluation-LM to estimate $\log \Pr_{\LLM}[Y = y \mid X = x]$ and $\log \Pr_{\LLM}[Y = y]$. After confirming the effectiveness of GEM metrics using the small model, for even better performance, we scaled up to a larger, 70B-parameter version of Llama-3.1 for computing the GRE-bench on the ICLR 2023 dataset in Section~\ref{sec:benchmark}, where we also show the correlation between results of the smaller (8B) and larger (70B) models, to ensure consistency of our approach.

For text preprocessing, we employ GPT-4o. {We also present the results with Llama-3.2 90B preprocessing in Appendix~\ref{app:results-preprocessing-llama}, and demonstrate the robustness of GEM metrics with various preprocessing models.} For GEM-S, without further indication, we use the abstract of the paper as the synopsis in this section. The prompts for probability estimation and preprocessing can be found in Appendix \ref{app:implement}.

\paragraph{Baselines} We compare the performance of GEM and its variants with the following established evaluation metrics commonly used in NLG evaluation, including two metrics from the pre-LLM era, \textit{BLEU} \citep{papineni2002bleu} and \textit{ROUGE-L} \citep{lin2004rouge}, and an embedding-based metric, \textit{BERTScore} \citep{zhang2019bertscore}, a probability-based metric \textit{BARTScore} \citep{yuan2021bartscore}, and a GPT-based metric \textit{LMExaminer  (Language Model as Examiner)}  \citep{bai2024benchmarking}. Here, we mainly discuss the LLM-based metrics, the detailed implementation of all baselines is provided in Appendix \ref{app:baselines}.

\textit{BERTScore} \citep{zhang2019bertscore} evaluates generated text by measuring the cosine similarity between token embeddings from a pre-trained language model. Instead of BERT, we use a recent model stella\_en\_400M\_v5 \citep{zhang2024stella} as the evaluation-LM for better token embedding performance.

\textit{BARTScore} \citep{yuan2021bartscore} is a probability-based metric. It has three variants, precision, recall, and F1-score. Instead of BART, we use the state-of-the-art Llama3.1-8b as the evaluation-LM for estimating the probability, which is the same as our GEM metrics.

\textit{LMExaminer} \citep{bai2024benchmarking} is one of the recently popular methods where a language model (LM) is used to evaluate the quality of generated text based on specific evaluation criteria, serving as an examiner. With empirical evidence suggesting that GPT-4 outperforms open-source models and even finetuned models of pointwise grading on AlignBench \citep{ke2023critiquellm}, we use GPT-4o as a baseline in our experiment. For peer review tasks, our prompt adopts criteria based on Review Quality Indicators (RQIs) \citep{goldberg2023peer,van1999development,superchi2019tools} including four aspects, understanding, coverage, substantiation, constructiveness.

\subsection{Result 1: Positive Correlation with Human Annotation}\label{sec:result1}

In this section, we show the experiment results utilizing a human-annotated dataset to validate how well the GEM metrics correlate with human-labeled quality scores compared to various baselines. A higher correlation indicates that the evaluation metric better aligns with human preference.


Note that, a perfect correlation can be impossible to achieve for several reasons: Human annotations are inherently noisy \citep{goldberg2023peer}, evaluation of individual responses can also be noisy due to subjectivity, and there is no gold-standard reference. Nonetheless, our goal is to demonstrate systematic-level positive correlation.

\textbf{Score.} For task $w_i$ with corresponding responses $\mathbf{x}_i = \{x_{ij}\}$ where $x_{ij}$ is the $j$-th response. The score of response $x_{ij}$ is computed by $\frac{1}{|\mathbf{x}_i|-1}\sum_{k \neq j}f(w_i,x_{ij},x_{ik})$ where $f$ is the evaluation metric.

\textbf{Human-Annotated Peer Grading Dataset.} This dataset contains 30 student project proposals in a graduate-level class on machine learning and about 180 peer evaluations of the proposals. We use GPT-4o to generate a short abstract for each proposal. Each proposal has at least 4 reviews, and each review contains an overall score and textual feedback, including ``Strengths of the project'', ``Weaknesses of the project/likely roadblocks'', and ``Ideas for improvement or specific directions''. According to the quality of students' textual feedback, an instructor manually grades the reviews with grade A, B, or C. 

\begin{table}[h]
\small\centering
\renewcommand{\arraystretch}{0.8}
\begin{minipage}[t]{0.45\linewidth}
\centering
\begin{tabular}{c@{\hspace{10pt}}c@{\hspace{10pt}}c}
\toprule
\textbf{Evaluation Metric} & \textbf{Spearman's $\rho$} & \textbf{p-value} \\ 
\midrule
BLEU & 0.023 & 0.772 \\
ROUGE-L & -0.244 & 0.002 \\
BERTScore & -0.061 & 0.439 \\
BARTScore-F1 & -0.237 & 0.002 \\
BARTScore-prec. & -0.511 & 2.3e-12 \\
\bottomrule
\end{tabular}
\end{minipage}
\hspace{0.05\linewidth}
\begin{minipage}[t]{0.45\linewidth}
\centering
\begin{tabular}{c@{\hspace{10pt}}c@{\hspace{10pt}}c}
\toprule
\textbf{Evaluation Metric} & \textbf{Spearman's $\rho$} & \textbf{p-value} \\ 
\midrule
BARTScore-recall & \textbf{0.164} & 0.036 \\
LMExaminer & \textbf{0.537} & 1.1e-13 \\
\midrule
GEM-raw & \textbf{0.300} & 9.2e-05 \\
GEM & \textbf{0.431} & 7.5e-09 \\
GEM-S & \textbf{0.479} & 7.4e-11 \\
\bottomrule
\end{tabular}
\end{minipage}
\caption{Spearman's correlation coefficient ($\rho$) and p-values between various evaluation metrics and instructor-annotated grades. Significant positive correlations ($p<0.05$) are bolded.}
\label{table:peer-grading}
\end{table}

\textbf{Statistics Metric and Results.} Table~\ref{table:peer-grading} shows Spearman's correlation coefficient $\rho$ between the scores of various evaluation metrics and the instructors' grades. All variants of our GEM metrics show a significant positive correlation. The GEM-S has the highest correlation over all three GEM variants, implying the effectiveness of filtering out shortcuts. It also shows competitive performance compared to GPT-4o LMExaminer which has the highest correlation. All other baselines perform worse than GEM, and only BARTScore-Recall has a significant positive correlation. In later sections, we provide further comparisons between GEM and LMExaminer.

\subsection{Result 2: Better Sensitivity to Degradation}\label{sec:result2}

We now compare how sensitive the evaluation metrics are to degradations that obviously reduce the semantic informativeness of the responses. A statistically significant score decrease indicates the sensitivity of the metric. We first introduce our validation workflow in Algorithm~\ref{alg:evaluation}, the ICLR peer review dataset, and then introduce several degradation strategies. The workflow and dataset will also be used in the robustness check again manipulation strategies in Section~\ref{sec:result3}.

\begin{algorithm}[ht]
\caption{Validation Workflow}
\label{alg:evaluation}

\KwIn{A dataset $D$ with $n$ tuples of tasks and associated text responses. An evaluation metric $f$ computing the scores. A degradation/manipulation strategy $M$.}
\KwOut{Statistics metrics.}

\BlankLine

\For{$i = 1$ \KwTo $n$}{
    Get the $i$-th tuple from the dataset: task $w_i$, candidate response $x_i$, and reference response $y_i$ \;
    Compute $s_i := f(w_i,x_i,y_i)$\;
    
    \BlankLine

    Replace the response $x_i$ with $x_i'$ according to degradation/manipulation strategy $M$\;
    Compute $s_i' := f(w_i,x_i',y_i)$\;
    
    \BlankLine
}

Compute the means $\mu$,$\mu'$ and standard deviations $\sigma$,$\sigma'$ of $\{s_i\}_{i\in[n]}$ and $\{s_i'\}_{i\in[n]}$ respectively.
\end{algorithm}

Note that the evaluation metrics that we compare are in different scales, thus, instead of comparing the mean difference, we standardize the scores with pooled standard deviation $\sigma_{\text{pooled}} := \sqrt{(\sigma^2 + \sigma'^2)/2}$, and compare the standardized mean difference (SMD) $d := \frac{\mu' - \mu}{\sigma_{\text{pooled}}}$ \citep{andrade2020mean}. This form of SMD is also known as Cohen’s d \citep{cohen1988statistical}.

\textbf{ICLR Dataset.} We use the ICLR 2023\footnote{At the time of our experiment, ICLR 2024 review data was not yet available through the OpenReview API.} peer review data publicly available on OpenReview. We randomly select 300 papers as our dataset, and for each paper, we randomly select 3 original human reviews: one review to serve as a human candidate, and the other two as reference responses.

\textbf{Degradation Strategies.} We introduce three degradation strategies as follows. Note that the degraded responses after \textit{Sentence Deletion} or \textit{Deletion \& Completion} can be regarded as Blackwell-dominated by the original one. The detailed setup is provided in Appendix~\ref{app:degradations}.

\begin{enumerate}
    \item \textit{Sentence Deletion.} We delete every other sentence of $x_i$ to get a degraded response $x_i'$. 
    \item \textit{Deletion \& Completion.} After deletion, we use GPT-4o to complete the deleted sentence with consistent language style and semantics to the original response, which provides an informatively degraded response $x_i'$ but with a similar length to the original one.
    \item \textit{Abstract-only Review.} We use Claude-3-sonnet\footnote{We intentionally use a different LLM from Deletion \& Completion to make sure the effect of score decrease is not caused only by the LM.} to create a fictitious review $x_i'$ with only the abstract of the paper, and use $x_i'$ to replace $x_i$.
\end{enumerate}




\textbf{Statistics and Results.} We compare the standardized mean differences (SMD) of evaluation metrics in Table~\ref{table:combined}: GEM and GEM-S are only two metrics that consistently have significant negative SMD on all three degradations. The LMExaminer shows the highest effectiveness on Sentence Deletion and Deletion \& Completion, but failed to penalize reviews only based on the abstract.

\begin{table}[htbp]
\small\centering
\renewcommand{\arraystretch}{0.9}
\begin{tabular}{ccccccc}
\toprule
    \textbf{Evaluation} & \textbf{Sentence} & \textbf{Deletion \&} & \textbf{Abstract-only} & \textbf{GPT-4o} & \textbf{Llama3.1} & \textbf{Meaningless} \\
    \textbf{Metric} & \textbf{Deletion} & \textbf{Completion} & \textbf{Review} & \textbf{Rephrase} & \textbf{Rephrase} & \textbf{Elongation} \\ 
    \midrule
BLEU & \textcolor{OliveGreen}{\textbf{-0.282}} & 0.016 & \textcolor{OliveGreen}{\textbf{-0.692}} & -0.975 & -0.920 & -0.165 \\
ROUGE-L & \textcolor{OliveGreen}{\textbf{-0.073}} & 0.091 & 0.022 & \textcolor{Red}{\textbf{0.028}} & \textcolor{Red}{\textbf{0.120}} & -0.196 \\
BERTScore & \textcolor{OliveGreen}{\textbf{-0.100}} & \textcolor{OliveGreen}{\textbf{-0.188}} & 0.840 & \textcolor{Red}{\textbf{0.134}} & \textcolor{Red}{\textbf{0.063}} & \textcolor{Red}{\textbf{0.064}} \\
BARTS.-F1 & 0.401 & 0.201 & 0.394 & \textcolor{Red}{\textbf{0.130}} & \textcolor{Red}{\textbf{0.332}} & -0.304 \\
BARTS.-prec. & 0.627 & 0.317 & 0.617 & \textcolor{Red}{\textbf{0.218}} & \textcolor{Red}{\textbf{0.541}} & -0.439 \\
BARTS.-recall & \textcolor{OliveGreen}{\textbf{-0.017}} & \textcolor{OliveGreen}{\textbf{-0.016}} & 0.004 & -0.027 & -0.028 & \textcolor{Red}{\textbf{0.004}} \\
LMExaminer & \textcolor{OliveGreen}{\textbf{-1.290}} & \textcolor{OliveGreen}{\textbf{-0.417}} & 0.715 & \textcolor{Red}{\textbf{0.187}} & \textcolor{Red}{\textbf{0.104}} & \textcolor{Red}{\textbf{0.105}} \\
\midrule
GEM-raw & \textcolor{OliveGreen}{\textbf{-0.123}} & \textcolor{OliveGreen}{\textbf{-0.126}} & 0.020 & -0.123 & -0.126 & \textcolor{Red}{\textbf{0.020}} \\
GEM & \textcolor{OliveGreen}{\textbf{-0.401}} & \textcolor{OliveGreen}{\textbf{-0.308}} & \textcolor{OliveGreen}{\textbf{-0.191}} & -0.058 & -0.107 & -0.063 \\
GEM-S & \textcolor{OliveGreen}{\textbf{-0.409}} & \textcolor{OliveGreen}{\textbf{-0.206}} & \textcolor{OliveGreen}{\textbf{-0.566}} & -0.046 & -0.114 & -0.070 \\
\bottomrule
\end{tabular}
\caption{Standardized Mean Differences (SMD) $d := (\mu' - \mu)/{\sigma_{\text{pooled}}}$ of scores after degradations and manipulations of various evaluation metrics. Significant score decreases ($p<0.05$) after degradations are highlighted in bold green, implying the evaluation metric can \textcolor{OliveGreen}{\textbf{effectively}} penalize the degradation. Significant score increases ($p<0.05$) after manipulations are highlighted in bold red, implying the evaluation metric is \textcolor{Red}{\textbf{not robust}} against the manipulation. We provide 95\% confidence interval of SMD values in Appendix~\ref{app:result-CI}.}
\label{table:combined}
\end{table}

\subsection{Result 3: Better Robustness against Manipulation}\label{sec:result3}

Evidence has been found that the LLM evaluators, and even humans, are not robust against specific manipulation strategies, they may rely on language style or length as shortcuts instead of providing an evaluation based on semantic quality \citep{panickssery2024llm, bai2024benchmarking, goldberg2023peer}, thus, it is important to further check our metric's robustness against several manipulations. 

Specifically, we employ two manipulation strategies, which do not significantly change the semantics, but may change the language style or add meaningless starting sentences. After manipulation, if the score significantly increases, the evaluation metric fails to pass the robustness check. We use the same workflow and dataset introduced in Section~\ref{sec:result2}.

\textbf{Manipulation Strategies.} We now introduce the manipulation strategies. The detailed setup, including the prompts and examples, is provided in Appendix~\ref{app:manipulations}.

\begin{enumerate}
    \item \textit{GPT-4o/Llama-3.1 Rephrase.} Following \citet{bai2024benchmarking}, we use GPT-4o and Llama-3.1 to rephrase the response $x_i$ while trying to keep the semantics unchanged. We test whether the evaluation metrics have preference towards GPT-4o or Llama-3.1 generated language. 
    \item \textit{Meaningless Elongation.} We adopt a meaningless elongation method similar to \citet{goldberg2023peer}'s human subject experiment where a set of fixed meaningless sentences is added to each section of all the reviews as shown in Figure~\ref{fig:elongation}. 
\end{enumerate}

\begin{figure}[!ht]
    \centering
    \includegraphics[width=0.9\linewidth]{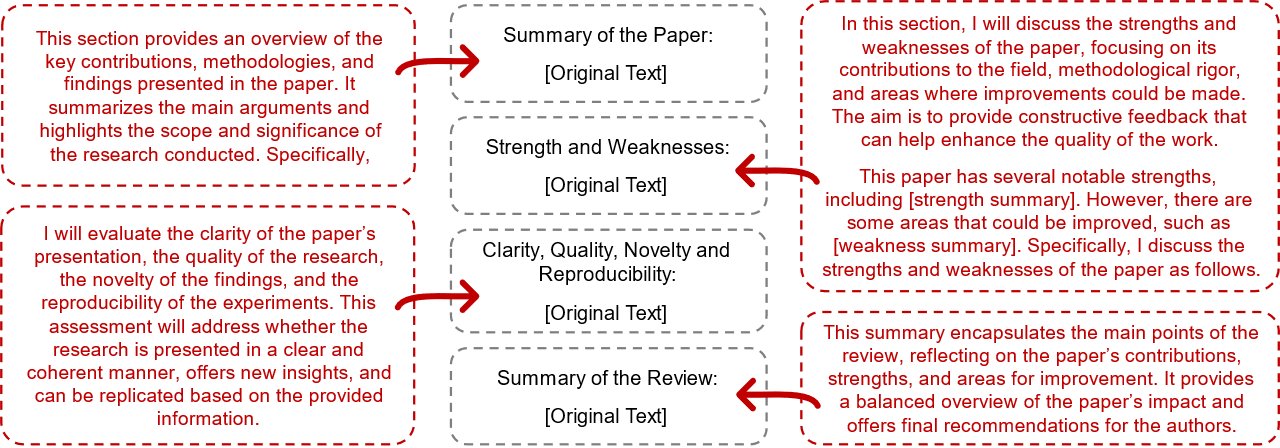}
    \caption{Meaningless Elongation}
    \label{fig:elongation}
\end{figure}

\textbf{Statistics and Results.} The SMDs shown in Table~\ref{table:combined} demonstrate that LMexaminer has significant score increases after all three manipulations, indicating its lack of manipulation-resistance. In contrast, GEM and GEM-S exhibit no significant score increase across these manipulations. Furthermore, comparing SMDs after degradations and manipulations in Table~\ref{table:combined}, we observe that the scores of GEM and GEM-S decrease less after manipulations compared to degradations, indicating that manipulations result in less information loss than degradations.

\section{GRE-bench: Benchmarking LLMs with GEM}\label{sec:benchmark} 

After validating our evaluation metrics GEM and GEM-S in Section~\ref{sec:result1}, \ref{sec:result2}, and \ref{sec:result3}, we now show the results of GRE-bench (Generating Review Evaluation Benchmark) based on GEM and GEM-S. We use the same ICLR 2023 peer review dataset introduced in Section~\ref{sec:result2}.

We prompt LLMs to generate detailed reviews for each paper in the dataset, with the prompt inspired by \citet{liang2023large}, which covers various aspects, including paper summary, strengths, weaknesses, questions, etc. Details of the prompts are provided in Appendix~\ref{app:review-generation}. 

We then use GEM and GEM-S to score the LLM-generated reviews. Specifically, as discussed in Section~\ref{subsec:shortcut}, we use GEM-S (abstract) and GEM-S (ASSW), where the synopsis consists of an abstract for GEM-S (abstract), and an abstract supplemented with a summary of author-stated strengths and weaknesses (ASSW) for GEM-S (ASSW). We prompt GPT-4o to get the ASSW summary of each paper, the prompt is shown in Appendix~\ref{app:assw}. We visualize the results of the GRE-bench based on GEM, GEM-S (abstract), and GEM-S (ASSW) respectively in Figure~\ref{fig:GRE-bench}. More detailed numerical results are provided in Table~\ref{table:GRE-bench-detail-70b} of Appendix~\ref{app:GRE-bench-detail-70b}.

\begin{figure}[ht]
    \centering
    \includegraphics[width=.95\linewidth]{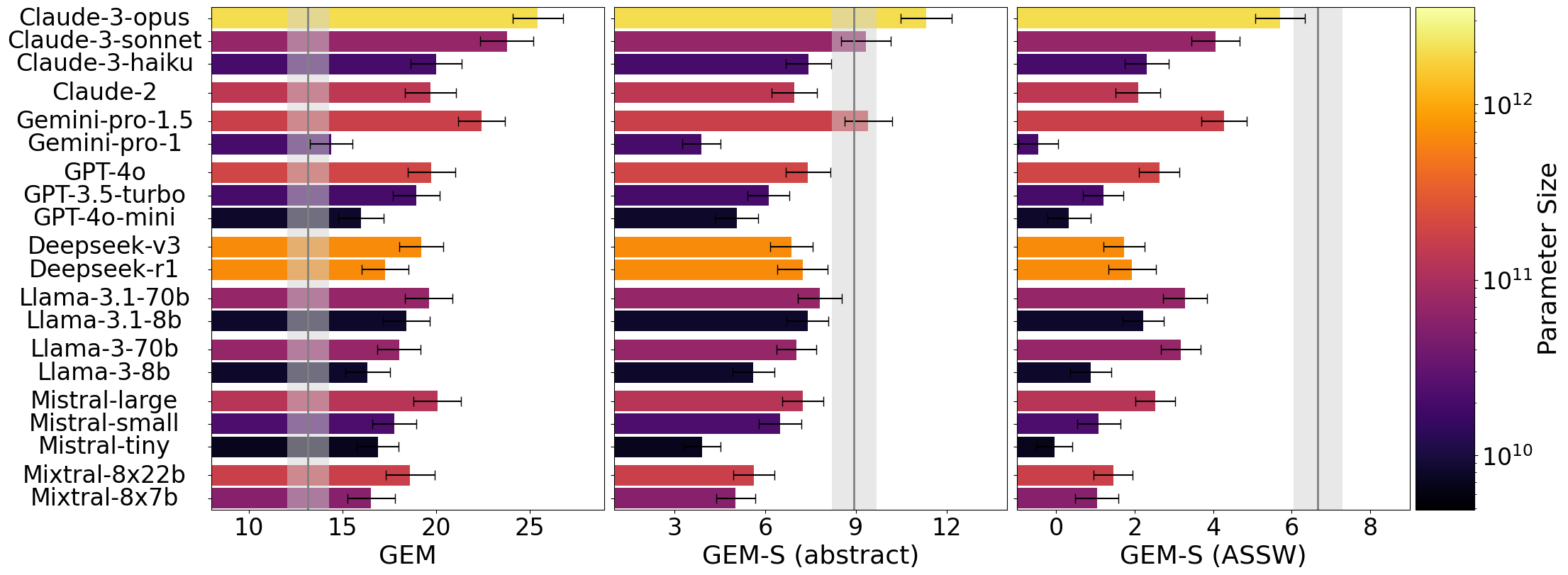}
    \caption{Results of GRE-bench based on three evaluation metrics with 90\% confidence intervals vs. model parameter sizes\protect\footnotemark indicated by color. The grey line represents the average human baseline, with the 90\% confidence interval shaded in grey.}
    \label{fig:GRE-bench}
\end{figure}

Figure~\ref{fig:GRE-bench} shows that, within each LLM family, such as the Llama-3 or Claude series, models with larger parameter sizes tend to perform better. Moreover, newer versions tend to outperform their predecessors with similar model sizes, as seen in comparisons between Llama-3.1 and Llama-3. These observations further validate the effectiveness of our GEM and GEM-S metrics in accurately evaluating LLMs.

{Notably, several models with large parameter sizes surpass the human baseline (gray line) when using GEM and GEM-S (abstract) metrics. We hypothesize this is because strong LLMs like Claude-3 opus can effectively retrieve contributions and limitations stated by the authors, and will include that in their reviews. In contrast, human reviewers tend to omit judgments that are less important or already stated by the authors due to writing costs, resulting in a loss of mutual information. This hypothesis is supported by the observation that, when we condition out the author-stated strengths and weaknesses, the human reviewer has the highest GRE-bench score based on GEM-S (ASSW).}

\footnotetext{{In Figure~\ref{fig:GRE-bench}, the parameter sizes of closed-source models are estimated from unofficial sources. This information is only to provide context, and none of our conclusions or key findings depend on these estimates.}}

{This observation further validates the hierarchical information structure discussed in Section~\ref{subsec:shortcut}.  By conditioning out different levels of information, the benchmark captures subtly different abilities. GEM-S (ASSW) emphasizes the quality of critical judgment, while GEM and GEM-S (abstract) focus more on evaluating LLMs' foundational abilities, including retrieving useful information from the paper, which is especially valuable for differentiating LLMs with smaller parameter size.}

\paragraph{GRE-bench vs. Other benchmarks} The ability to generate informative reviews relies on several key factors, including logical reasoning, critical thinking, and fact-checking. Various widely used benchmarks measure different aspects of LLM performance. To better understand which abilities are most critical for generating high-quality reviews, in the sense of GRE-bench scores, we compared our results with several other benchmarks.

We present the Spearman's correlation coefficients between the GRE-bench scores of all the models and their corresponding scores on other benchmarks\footnote{We collect benchmark data from online sources, such as technical reports of language models. Since some models have missing results for specific benchmarks, we exclude those entries when computing correlation. For more details, please refer to our code repository.} in Table~\ref{table:compare} in Appendix~\ref{app:GRE-bench-detail-70b}, providing insights into which factors most strongly influence review generation quality. Notably, GRE-bench highly correlates with HellaSWAG and ARC-C which measures reasoning ability, and less correlates with benchmarks for coding ability (HumanEval) or math ability (MATH, GSM8K).

\begin{table}[!ht]
\small\centering\renewcommand{\arraystretch}{0.9}
\begin{tabular}{cccccccc}
\toprule
\textbf{Base Metric} & \textbf{MMLU} & \textbf{ARC-C} & \textbf{HellaSwag} & \textbf{GSM8K} & \textbf{MATH} & \textbf{HumanEval} & \textbf{GPQA}\\ 
\midrule
GEM & 0.55 & 0.68 & 0.74 & 0.58 & 0.37 & 0.36 & 0.60\\
GEM-S (abstract) & 0.66 & 0.70 & 0.82 & 0.73 & 0.43 & 0.43 & 0.67\\
GEM-S (ASSW) & 0.68 & 0.78 & 0.84 & 0.73 & 0.43 & 0.48 & 0.71\\
\bottomrule
\end{tabular}
\caption{Spearman's correlation coefficients between GRE-bench and other popular benchmarks.}
\label{table:compare}
\end{table} 

\paragraph{Consistent results over various Evaluation LMs.} {
The Spearman's correlation coefficients between the GRE-bench scores based on GEM metric computed using Llama-3.1 8B and that same score computed with Llama-3.1 70B is 0.93. This indicates that the score is largely robust with respect to the specific evaluation LM used. The analogous coefficient for the GRE-bench scores based on GEM-S (abstract) and GEM-S (ASSW) are 0.92 and 0.90 respectively. We further provide the comprehensive results of GRE-bench with Llama-3.1 8B in Table~\ref{table:GRE-bench-detail-8b} in Appendix~\ref{app:GRE-bench-detail-70b}. This consistency shows that with our metrics, a small model with 8B parameters that can be deployed on a single NVIDIA L4 GPU, plus proper preprocessing, is still effective in evaluating the quality of judgments generated by much larger models, though larger evaluation LM may have better evaluation performance.}

\section{Conclusion and Discussion}

In conclusion, we introduce GEM and GEM-S, two metrics for evaluating LLMs on tasks without gold-standard references. They have proved to be accurate and manipulation-resistant, outperforming other metrics in the experiment. Based on GEM and GEM-S, we present GRE-bench for assessing LLMs' peer review capabilities while minimizing the risk of data contamination.

Our approach has certain limitations. First, although our experiments demonstrate its effectiveness, the theoretical results depend on the accurate estimation of the underlying distribution by the LLM, which may not always be reliable. With ongoing advancements in LLM technology, we hope that these estimations will become more accurate over time. Second, while our method does not require gold-standard references, it requires reference that includes original human-generated information. If all reviewers submit solely LLM-generated reviews—such as those only from GPT-4o—the mutual information-based approach may unfairly assign the highest score to GPT-4o. This echoes recent research demonstrating that the absence of original human-generated data in AI evaluation can result in model collapse \citep{Shumailov2024}.

For future work, we can use new open-access peer review data to update GRE-bench every year and observe how the evaluation evolves over time, to monitor any improvements in LLMs' peer review capabilities. This will also ensure the benchmark stays current and minimizes data contamination. {Additionally, exploring GEM's potential to evaluate subjective content in other scenarios, such as book reviews and movie reviews, could broaden its applicability and further demonstrate its versatility. We provide initial results in validating GEM on a Yelp restaurant review dataset in Appendix~\ref{app:results-yelp}.}


\newpage

\section*{Acknowledgment}
We would like to thank Sanzeed Anwar from University of Michigan for data processing. We also thank Zebang Xu and Tianzi Liu from Cornell University for their valuable consultation on the statistical methods employed in our experiments. Additionally, we appreciate the insightful feedback and suggestions provided by the anonymous reviewers at ICLR 2025.

\section*{Reproducibility Statement}

In this section, we demonstrate the efforts we have made to ensure better reproducibility of the paper.

\paragraph{Complete Theoretical Proof} In \Cref{app:proofs}, we provide a full proof of the conclusions claimed in the paper.

\paragraph{Ready-to-Use Code} We have made all the code for data collection, model fine-tuning, metric computation, and experiments available on GitHub. The repository is at \url{https://github.com/yx-lu/Benchmarking-LLMs--Judgments-with-No-Gold-Standard}. Additionally, to address potential hardware issues for reproducing the results, our code can be deployed on Google Colab (requiring a T4 or higher GPU, with A100 being recommended). Experiments with Llama-3.1-8B as the evaluation LM are conducted on Google Colab instances with NVIDIA A100 GPU. Experiments in Section~\ref{sec:benchmark} employ Llama-3.1-70B as the evaluation LM requiring more GPU VRAM, thus, we use rental instances with a configuration of two NVIDIA H100 GPUs.

\paragraph{Clear Implementation and Experimental Details} In \Cref{app:implement}, we thoroughly discuss the implementation of our GEM and GEM-S metrics. In \Cref{app:experiment}, we provide detailed explanations of the experimental procedures, including how we degrade and manipulate reviews, as well as the methodology for computing the baseline metrics.

\bibliography{ref}

\begin{thebibliography}{56}
\providecommand{\natexlab}[1]{#1}
\providecommand{\url}[1]{\texttt{#1}}
\expandafter\ifx\csname urlstyle\endcsname\relax
  \providecommand{\doi}[1]{doi: #1}\else
  \providecommand{\doi}{doi: \begingroup \urlstyle{rm}\Url}\fi

\bibitem[Ali \& Silvey(1966)Ali and Silvey]{ali1966general}
Syed~Mumtaz Ali and Samuel~D Silvey.
\newblock A general class of coefficients of divergence of one distribution from another.
\newblock \emph{Journal of the Royal Statistical Society: Series B (Methodological)}, 28\penalty0 (1):\penalty0 131--142, 1966.

\bibitem[Andrade(2020)]{andrade2020mean}
Chittaranjan Andrade.
\newblock Mean difference, standardized mean difference (smd), and their use in meta-analysis: as simple as it gets.
\newblock \emph{The Journal of clinical psychiatry}, 81\penalty0 (5):\penalty0 11349, 2020.

\bibitem[Bai et~al.(2024)Bai, Ying, Cao, Lv, He, Wang, Yu, Zeng, Xiao, Lyu, et~al.]{bai2024benchmarking}
Yushi Bai, Jiahao Ying, Yixin Cao, Xin Lv, Yuze He, Xiaozhi Wang, Jifan Yu, Kaisheng Zeng, Yijia Xiao, Haozhe Lyu, et~al.
\newblock Benchmarking foundation models with language-model-as-an-examiner.
\newblock \emph{Advances in Neural Information Processing Systems}, 36, 2024.

\bibitem[Belghazi et~al.(2018)Belghazi, Baratin, Rajeswar, Ozair, Bengio, Courville, and Hjelm]{belghazi2018mine}
Mohamed~Ishmael Belghazi, Aristide Baratin, Sai Rajeswar, Sherjil Ozair, Yoshua Bengio, Aaron Courville, and R~Devon Hjelm.
\newblock Mine: mutual information neural estimation.
\newblock \emph{arXiv preprint arXiv:1801.04062}, 2018.

\bibitem[Blackwell(1951)]{blackwell1951comparison}
David Blackwell.
\newblock Comparison of experiments.
\newblock In \emph{Proceedings of the second Berkeley symposium on mathematical statistics and probability}, volume~2, pp.\  93--103. University of California Press, 1951.

\bibitem[Blackwell(1953)]{blackwell1953equivalent}
David Blackwell.
\newblock Equivalent comparisons of experiments.
\newblock \emph{The annals of mathematical statistics}, pp.\  265--272, 1953.

\bibitem[Brier(1950)]{brier1950verification}
Glenn~W Brier.
\newblock Verification of forecasts expressed in terms of probability.
\newblock \emph{Monthly weather review}, 78\penalty0 (1):\penalty0 1--3, 1950.

\bibitem[Burrell \& Schoenebeck(2021)Burrell and Schoenebeck]{burrell2021measurement}
Noah Burrell and Grant Schoenebeck.
\newblock Measurement integrity in peer prediction: A peer assessment case study.
\newblock \emph{arXiv preprint arXiv:2108.05521}, 2021.

\bibitem[Cao et~al.(2019)Cao, Xu, Kong, and Wang]{cao2019max}
Peng Cao, Yilun Xu, Yuqing Kong, and Yizhou Wang.
\newblock Max-mig: an information theoretic approach for joint learning from crowds.
\newblock \emph{arXiv preprint arXiv:1905.13436}, 2019.

\bibitem[Chollet(2019)]{chollet2019measure}
Fran{\c{c}}ois Chollet.
\newblock On the measure of intelligence.
\newblock \emph{arXiv preprint arXiv:1911.01547}, 2019.

\bibitem[Church \& Hanks(1990)Church and Hanks]{church1990word}
Kenneth Church and Patrick Hanks.
\newblock Word association norms, mutual information, and lexicography.
\newblock \emph{Computational linguistics}, 16\penalty0 (1):\penalty0 22--29, 1990.

\bibitem[Cobbe et~al.(2021)Cobbe, Kosaraju, Bavarian, Chen, Jun, Kaiser, Plappert, Tworek, Hilton, Nakano, et~al.]{cobbe2021training}
Karl Cobbe, Vineet Kosaraju, Mohammad Bavarian, Mark Chen, Heewoo Jun, Lukasz Kaiser, Matthias Plappert, Jerry Tworek, Jacob Hilton, Reiichiro Nakano, et~al.
\newblock Training verifiers to solve math word problems.
\newblock \emph{arXiv preprint arXiv:2110.14168}, 2021.

\bibitem[Cohen(1988)]{cohen1988statistical}
J~Cohen.
\newblock \emph{Statistical power analysis for the behavioral sciences}.
\newblock 1988.

\bibitem[Dasgupta \& Ghosh(2013)Dasgupta and Ghosh]{dasgupta2013crowdsourced}
Anirban Dasgupta and Arpita Ghosh.
\newblock Crowdsourced judgement elicitation with endogenous proficiency.
\newblock In \emph{Proceedings of the 22nd international conference on World Wide Web}, pp.\  319--330, 2013.

\bibitem[Donker(2023)]{donker2023dangers}
Tjibbe Donker.
\newblock The dangers of using large language models for peer review.
\newblock \emph{The Lancet Infectious Diseases}, 2023.

\bibitem[Fano \& Hawkins(1961)Fano and Hawkins]{fano1961transmission}
Robert~M Fano and David Hawkins.
\newblock Transmission of information: A statistical theory of communications.
\newblock \emph{American Journal of Physics}, 29\penalty0 (11):\penalty0 793--794, 1961.

\bibitem[Fu et~al.(2023)Fu, Ng, Jiang, and Liu]{fu2023gptscore}
Jinlan Fu, See-Kiong Ng, Zhengbao Jiang, and Pengfei Liu.
\newblock Gptscore: Evaluate as you desire.
\newblock \emph{arXiv preprint arXiv:2302.04166}, 2023.

\bibitem[Geirhos et~al.(2020)Geirhos, Jacobsen, Michaelis, Zemel, Brendel, Bethge, and Wichmann]{geirhos2020shortcut}
Robert Geirhos, J{\"o}rn-Henrik Jacobsen, Claudio Michaelis, Richard Zemel, Wieland Brendel, Matthias Bethge, and Felix~A Wichmann.
\newblock Shortcut learning in deep neural networks.
\newblock \emph{Nature Machine Intelligence}, 2\penalty0 (11):\penalty0 665--673, 2020.

\bibitem[Golchin \& Surdeanu(2023)Golchin and Surdeanu]{golchin2023time}
Shahriar Golchin and Mihai Surdeanu.
\newblock Time travel in llms: Tracing data contamination in large language models.
\newblock \emph{arXiv preprint arXiv:2308.08493}, 2023.

\bibitem[Goldberg et~al.(2023)Goldberg, Stelmakh, Cho, Oh, Agarwal, Belgrave, and Shah]{goldberg2023peer}
Alexander Goldberg, Ivan Stelmakh, Kyunghyun Cho, Alice Oh, Alekh Agarwal, Danielle Belgrave, and Nihar~B Shah.
\newblock Peer reviews of peer reviews: A randomized controlled trial and other experiments.
\newblock \emph{arXiv preprint arXiv:2311.09497}, 2023.

\bibitem[Good(1952)]{good1952rational}
Irving~John Good.
\newblock Rational decisions.
\newblock \emph{Journal of the Royal Statistical Society: Series B (Methodological)}, 14\penalty0 (1):\penalty0 107--114, 1952.

\bibitem[Hendrickson \& Buehler(1971)Hendrickson and Buehler]{hendrickson1971proper}
Arlo~D Hendrickson and Robert~J Buehler.
\newblock Proper scores for probability forecasters.
\newblock \emph{The Annals of Mathematical Statistics}, 42\penalty0 (6):\penalty0 1916--1921, 1971.

\bibitem[Hendrycks et~al.(2020)Hendrycks, Burns, Basart, Zou, Mazeika, Song, and Steinhardt]{hendrycks2020measuring}
Dan Hendrycks, Collin Burns, Steven Basart, Andy Zou, Mantas Mazeika, Dawn Song, and Jacob Steinhardt.
\newblock Measuring massive multitask language understanding.
\newblock \emph{arXiv preprint arXiv:2009.03300}, 2020.

\bibitem[Hjelm et~al.(2018)Hjelm, Fedorov, Lavoie-Marchildon, Grewal, Bachman, Trischler, and Bengio]{hjelm2018learning}
R~Devon Hjelm, Alex Fedorov, Samuel Lavoie-Marchildon, Karan Grewal, Phil Bachman, Adam Trischler, and Yoshua Bengio.
\newblock Learning deep representations by mutual information estimation and maximization.
\newblock \emph{arXiv preprint arXiv:1808.06670}, 2018.

\bibitem[Ke et~al.(2023)Ke, Wen, Feng, Liu, Lei, Cheng, Wang, Zeng, Dong, Wang, et~al.]{ke2023critiquellm}
Pei Ke, Bosi Wen, Zhuoer Feng, Xiao Liu, Xuanyu Lei, Jiale Cheng, Shengyuan Wang, Aohan Zeng, Yuxiao Dong, Hongning Wang, et~al.
\newblock Critiquellm: Scaling llm-as-critic for effective and explainable evaluation of large language model generation.
\newblock \emph{arXiv preprint arXiv:2311.18702}, 2023.

\bibitem[Kong \& Schoenebeck(2018{\natexlab{a}})Kong and Schoenebeck]{kong2018eliciting}
Yuqing Kong and Grant Schoenebeck.
\newblock Eliciting expertise without verification.
\newblock In \emph{Proceedings of the 2018 ACM Conference on Economics and Computation}, pp.\  195--212, 2018{\natexlab{a}}.

\bibitem[Kong \& Schoenebeck(2018{\natexlab{b}})Kong and Schoenebeck]{kong2018water}
Yuqing Kong and Grant Schoenebeck.
\newblock Water from two rocks: Maximizing the mutual information.
\newblock In \emph{Proceedings of the 2018 ACM Conference on Economics and Computation}, pp.\  177--194, 2018{\natexlab{b}}.

\bibitem[Kong \& Schoenebeck(2019)Kong and Schoenebeck]{kong2019information}
Yuqing Kong and Grant Schoenebeck.
\newblock An information theoretic framework for designing information elicitation mechanisms that reward truth-telling.
\newblock \emph{ACM Transactions on Economics and Computation (TEAC)}, 7\penalty0 (1):\penalty0 1--33, 2019.

\bibitem[Kuznetsov et~al.(2024)Kuznetsov, Afzal, Dercksen, Dycke, Goldberg, Hope, Hovy, Kummerfeld, Lauscher, Leyton-Brown, et~al.]{kuznetsov2024can}
Ilia Kuznetsov, Osama~Mohammed Afzal, Koen Dercksen, Nils Dycke, Alexander Goldberg, Tom Hope, Dirk Hovy, Jonathan~K Kummerfeld, Anne Lauscher, Kevin Leyton-Brown, et~al.
\newblock What can natural language processing do for peer review?
\newblock \emph{arXiv preprint arXiv:2405.06563}, 2024.

\bibitem[Kwiatkowski et~al.(2019)Kwiatkowski, Palomaki, Redfield, Collins, Parikh, Alberti, Epstein, Polosukhin, Devlin, Lee, et~al.]{kwiatkowski2019natural}
Tom Kwiatkowski, Jennimaria Palomaki, Olivia Redfield, Michael Collins, Ankur Parikh, Chris Alberti, Danielle Epstein, Illia Polosukhin, Jacob Devlin, Kenton Lee, et~al.
\newblock Natural questions: a benchmark for question answering research.
\newblock \emph{Transactions of the Association for Computational Linguistics}, 7:\penalty0 453--466, 2019.

\bibitem[Liang et~al.(2023)Liang, Zhang, Cao, Wang, Ding, Yang, Vodrahalli, He, Smith, Yin, McFarland, and Zou]{liang2023large}
Weixin Liang, Yuhui Zhang, Hancheng Cao, Binglu Wang, Daisy Ding, Xinyu Yang, Kailas Vodrahalli, Siyu He, Daniel Smith, Yian Yin, Daniel McFarland, and James Zou.
\newblock Can large language models provide useful feedback on research papers? a large-scale empirical analysis, 2023.

\bibitem[Lin(2004)]{lin2004rouge}
Chin-Yew Lin.
\newblock Rouge: A package for automatic evaluation of summaries.
\newblock In \emph{Text summarization branches out}, pp.\  74--81, 2004.

\bibitem[Lin et~al.(2021)Lin, Hilton, and Evans]{lin2021truthfulqa}
Stephanie Lin, Jacob Hilton, and Owain Evans.
\newblock Truthfulqa: Measuring how models mimic human falsehoods.
\newblock \emph{arXiv preprint arXiv:2109.07958}, 2021.

\bibitem[Liu \& Shah(2023)Liu and Shah]{liu2023reviewergpt}
Ryan Liu and Nihar~B. Shah.
\newblock Reviewergpt? an exploratory study on using large language models for paper reviewing, 2023.

\bibitem[Lu et~al.(2024)Lu, Xu, Zhang, Kong, and Schoenebeck]{lu2024eliciting}
Yuxuan Lu, Shengwei Xu, Yichi Zhang, Yuqing Kong, and Grant Schoenebeck.
\newblock Eliciting informative text evaluations with large language models.
\newblock \emph{arXiv preprint arXiv:2405.15077}, 2024.

\bibitem[Miller et~al.(2005)Miller, Resnick, and Zeckhauser]{miller2005eliciting}
Nolan Miller, Paul Resnick, and Richard Zeckhauser.
\newblock Eliciting informative feedback: The peer-prediction method.
\newblock \emph{Management Science}, 51\penalty0 (9):\penalty0 1359--1373, 2005.

\bibitem[Nenkova \& Passonneau(2004)Nenkova and Passonneau]{nenkova2004evaluating}
Ani Nenkova and Rebecca~J Passonneau.
\newblock Evaluating content selection in summarization: The pyramid method.
\newblock In \emph{Proceedings of the human language technology conference of the north american chapter of the association for computational linguistics: Hlt-naacl 2004}, pp.\  145--152, 2004.

\bibitem[Oren et~al.(2023)Oren, Meister, Chatterji, Ladhak, and Hashimoto]{oren2023proving}
Yonatan Oren, Nicole Meister, Niladri Chatterji, Faisal Ladhak, and Tatsunori~B Hashimoto.
\newblock Proving test set contamination in black box language models.
\newblock \emph{arXiv preprint arXiv:2310.17623}, 2023.

\bibitem[Panickssery et~al.(2024)Panickssery, Bowman, and Feng]{panickssery2024llm}
Arjun Panickssery, Samuel~R Bowman, and Shi Feng.
\newblock Llm evaluators recognize and favor their own generations.
\newblock \emph{arXiv preprint arXiv:2404.13076}, 2024.

\bibitem[Papineni et~al.(2002)Papineni, Roukos, Ward, and Zhu]{papineni2002bleu}
Kishore Papineni, Salim Roukos, Todd Ward, and Wei-Jing Zhu.
\newblock Bleu: a method for automatic evaluation of machine translation.
\newblock In \emph{Proceedings of the 40th annual meeting of the Association for Computational Linguistics}, pp.\  311--318, 2002.

\bibitem[Prelec(2004)]{prelec2004bayesian}
Drazen Prelec.
\newblock A bayesian truth serum for subjective data.
\newblock \emph{science}, 306\penalty0 (5695):\penalty0 462--466, 2004.

\bibitem[Robertson(2023)]{robertson2023gpt4}
Zachary Robertson.
\newblock Gpt4 is slightly helpful for peer-review assistance: A pilot study, 2023.

\bibitem[Sainz et~al.(2023)Sainz, Campos, Garc{\'\i}a-Ferrero, Etxaniz, de~Lacalle, and Agirre]{sainz2023nlp}
Oscar Sainz, Jon~Ander Campos, Iker Garc{\'\i}a-Ferrero, Julen Etxaniz, Oier~Lopez de~Lacalle, and Eneko Agirre.
\newblock Nlp evaluation in trouble: On the need to measure llm data contamination for each benchmark.
\newblock \emph{arXiv preprint arXiv:2310.18018}, 2023.

\bibitem[Shannon(1948)]{shannon1948mathematical}
Claude~Elwood Shannon.
\newblock A mathematical theory of communication.
\newblock \emph{The Bell system technical journal}, 27\penalty0 (3):\penalty0 379--423, 1948.

\bibitem[Shumailov et~al.(2024)Shumailov, Shumaylov, Zhao, Papernot, Anderson, and Gal]{Shumailov2024}
Ilia Shumailov, Zakhar Shumaylov, Yiren Zhao, Nicolas Papernot, Ross Anderson, and Yarin Gal.
\newblock Ai models collapse when trained on recursively generated data.
\newblock \emph{Nature}, 631\penalty0 (8022):\penalty0 755--759, Jul 2024.
\newblock ISSN 1476-4687.
\newblock \doi{10.1038/s41586-024-07566-y}.
\newblock URL \url{https://doi.org/10.1038/s41586-024-07566-y}.

\bibitem[Superchi et~al.(2019)Superchi, Gonz{\'a}lez, Sol{\`a}, Cobo, Hren, and Boutron]{superchi2019tools}
Cecilia Superchi, Jos{\'e}~Antonio Gonz{\'a}lez, Ivan Sol{\`a}, Erik Cobo, Darko Hren, and Isabelle Boutron.
\newblock Tools used to assess the quality of peer review reports: a methodological systematic review.
\newblock \emph{BMC medical research methodology}, 19:\penalty0 1--14, 2019.

\bibitem[Thomas \& Joy(2006)Thomas and Joy]{thomas2006elements}
MTCAJ Thomas and A~Thomas Joy.
\newblock \emph{Elements of information theory}.
\newblock Wiley-Interscience, 2006.

\bibitem[Van~Rooyen et~al.(1999)Van~Rooyen, Black, and Godlee]{van1999development}
Susan Van~Rooyen, Nick Black, and Fiona Godlee.
\newblock Development of the review quality instrument (rqi) for assessing peer reviews of manuscripts.
\newblock \emph{Journal of clinical epidemiology}, 52\penalty0 (7):\penalty0 625--629, 1999.

\bibitem[Xu et~al.(2024)Xu, Zhang, Resnick, and Schoenebeck]{xu2024spot}
Shengwei Xu, Yichi Zhang, Paul Resnick, and Grant Schoenebeck.
\newblock Spot check equivalence: an interpretable metric for information elicitation mechanisms.
\newblock \emph{arXiv preprint arXiv:2402.13567}, 2024.

\bibitem[Xu et~al.(2019)Xu, Cao, Kong, and Wang]{xu2019l_dmi}
Yilun Xu, Peng Cao, Yuqing Kong, and Yizhou Wang.
\newblock L\_dmi: A novel information-theoretic loss function for training deep nets robust to label noise.
\newblock \emph{Advances in neural information processing systems}, 32, 2019.

\bibitem[Yuan et~al.(2021)Yuan, Neubig, and Liu]{yuan2021bartscore}
Weizhe Yuan, Graham Neubig, and Pengfei Liu.
\newblock Bartscore: Evaluating generated text as text generation.
\newblock \emph{Advances in Neural Information Processing Systems}, 34:\penalty0 27263--27277, 2021.

\bibitem[Zellers et~al.(2019)Zellers, Holtzman, Bisk, Farhadi, and Choi]{zellers2019hellaswag}
Rowan Zellers, Ari Holtzman, Yonatan Bisk, Ali Farhadi, and Yejin Choi.
\newblock Hellaswag: Can a machine really finish your sentence?
\newblock \emph{arXiv preprint arXiv:1905.07830}, 2019.

\bibitem[Zhang(2024)]{zhang2024stella}
Dun Zhang.
\newblock stella\_en\_400m\_v5.
\newblock \url{https://huggingface.co/dunzhang/stella_en_400M_v5}, 2024.

\bibitem[Zhang et~al.(2019)Zhang, Kishore, Wu, Weinberger, and Artzi]{zhang2019bertscore}
Tianyi Zhang, Varsha Kishore, Felix Wu, Kilian~Q Weinberger, and Yoav Artzi.
\newblock Bertscore: Evaluating text generation with bert.
\newblock \emph{arXiv preprint arXiv:1904.09675}, 2019.

\bibitem[Zheng et~al.(2023{\natexlab{a}})Zheng, Chiang, Sheng, Zhuang, Wu, Zhuang, Lin, Li, Li, Xing, et~al.]{zheng2023judging}
Lianmin Zheng, Wei-Lin Chiang, Ying Sheng, Siyuan Zhuang, Zhanghao Wu, Yonghao Zhuang, Zi~Lin, Zhuohan Li, Dacheng Li, Eric Xing, et~al.
\newblock Judging llm-as-a-judge with mt-bench and chatbot arena.
\newblock \emph{Advances in Neural Information Processing Systems}, 36:\penalty0 46595--46623, 2023{\natexlab{a}}.

\bibitem[Zheng et~al.(2023{\natexlab{b}})Zheng, Sheng, Chiang, Zhang, Gonzalez, and Stoica]{zheng2023chatbot}
Lianmin Zheng, Ying Sheng, Wei-Lin Chiang, Hao Zhang, Joseph~E Gonzalez, and Ion Stoica.
\newblock Chatbot arena: Benchmarking llms in the wild with elo ratings.
\newblock \emph{Blog post, May}, 2023{\natexlab{b}}.

\end{thebibliography}
\bibliographystyle{iclr2025_conference}

\clearpage

\appendix

\section{Additional Results}

In this section, we provide additional detailed results that were omitted from the main paper and present the results of additional experiments to further validate our GEM metrics.

\subsection{Result 2 and 3 with Confidence Intervals}\label{app:result-CI}

We present the SMDs with 95\% confidence intervals of various evaluation metrics under degradations and manipulations in Table~\ref{table:degradation} and~\ref{table:manipulation}.

\begin{table}[htbp]
\small\centering\renewcommand{\arraystretch}{0.85}
\begin{tabular}{cccccc}
\toprule
\textbf{Evaluation Metric} & \textbf{Sentence Deletion} & \textbf{Deletion \& Completion} & \textbf{Abstract-only Review} \\ 
\midrule
BLEU & \textcolor{OliveGreen}{\textbf{-0.282}} & 0.016 & \textcolor{OliveGreen}{\textbf{-0.692}} \\
& (-0.346,-0.218) & (-0.013,0.046) & (-0.778,-0.607) \\
ROUGE-L & \textcolor{OliveGreen}{\textbf{-0.073}} & 0.091 & 0.022 \\
& (-0.122,-0.025) & (0.062,0.121) & (-0.054,0.098) \\
BERTScore & \textcolor{OliveGreen}{\textbf{-0.100}} & \textcolor{OliveGreen}{\textbf{-0.188}} & 0.840 \\
& (-0.131,-0.069) & (-0.222,-0.155) & (0.769,0.910) \\
BARTScore-F1 & 0.401 & 0.201 & 0.394 \\
& (0.380,0.422) & (0.184,0.218) & (0.344,0.445) \\
BARTScore-precision & 0.627 & 0.317 & 0.617 \\
& (0.595,0.659) & (0.292,0.343) & (0.536,0.698) \\
BARTScore-recall & \textcolor{OliveGreen}{\textbf{-0.017}} & \textcolor{OliveGreen}{\textbf{-0.016}} & 0.004 \\
& (-0.019,-0.016) & (-0.017,-0.014) & (0.001,0.007) \\
LMExaminer & \textcolor{OliveGreen}{\textbf{-1.290}} & \textcolor{OliveGreen}{\textbf{-0.417}} & 0.715 \\
& (-1.343,-1.238) & (-0.472,-0.363) & (0.630,0.799) \\
\midrule
GEM-raw & \textcolor{OliveGreen}{\textbf{-0.123}} & \textcolor{OliveGreen}{\textbf{-0.126}} & 0.020 \\
& (-0.126,-0.120) & (-0.132,-0.121) & (0.017,0.023) \\
GEM & \textcolor{OliveGreen}{\textbf{-0.401}} & \textcolor{OliveGreen}{\textbf{-0.308}} & \textcolor{OliveGreen}{\textbf{-0.191}} \\
& (-0.448,-0.354) & (-0.358,-0.258) & (-0.261,-0.122) \\
GEM-S & \textcolor{OliveGreen}{\textbf{-0.409}} & \textcolor{OliveGreen}{\textbf{-0.206}} & \textcolor{OliveGreen}{\textbf{-0.566}} \\
& (-0.455,-0.362) & (-0.254,-0.158) & (-0.639,-0.492) \\
\bottomrule
\end{tabular}
\caption{Standardized Mean Differences (SMD) $d := (\mu' - \mu)/{\sigma_{\text{pooled}}}$ of scores after degradations of various evaluation metrics with 95\% confidence intervals in parentheses. Significant score decreases ($p<0.05$) after degradations are highlighted in bold green, implying the evaluation metric can \textcolor{OliveGreen}{\textbf{effectively}} penalize the degradation.}
\label{table:degradation}
\end{table}

\begin{table}[htbp]
\small\centering\renewcommand{\arraystretch}{0.85}
\begin{tabular}{cccccc}
\toprule
\textbf{Evaluation Metric} & \textbf{GPT-4o Rephrase} & \textbf{Llama3.1 Rephrase} & \textbf{Meaningless Elongation} \\ 
\midrule
BLEU & -0.975 & -0.920 & -0.165 \\
& (-1.020,-0.930) & (-0.971,-0.870) & (-0.230,-0.101) \\
ROUGE-L & \textcolor{Red}{\textbf{0.028}} & \textcolor{Red}{\textbf{0.120}} & -0.196 \\
& (0.009,0.047) & (0.080,0.160) & (-0.241,-0.151) \\
BERTScore & \textcolor{Red}{\textbf{0.134}} & \textcolor{Red}{\textbf{0.063}} & \textcolor{Red}{\textbf{0.064}} \\
& (0.113,0.155) & (0.032,0.093) & (0.042,0.086) \\
BARTScore-F1 & \textcolor{Red}{\textbf{0.130}} & \textcolor{Red}{\textbf{0.332}} & -0.304 \\
& (0.120,0.140) & (0.299,0.365) & (-0.308,-0.299) \\
BARTScore-precision & \textcolor{Red}{\textbf{0.218}} & \textcolor{Red}{\textbf{0.541}} & -0.439 \\
& (0.204,0.233) & (0.490,0.593) & (-0.446,-0.433) \\
BARTScore-recall & -0.027 & -0.028 & \textcolor{Red}{\textbf{0.004}} \\
& (-0.028,-0.027) & (-0.029,-0.027) & (0.004,0.005) \\
LMexaminer & \textcolor{Red}{\textbf{0.187}} & \textcolor{Red}{\textbf{0.104}} & \textcolor{Red}{\textbf{0.105}} \\
& (0.153,0.221) & (0.060,0.147) & (0.069,0.140) \\
\midrule
GEM-raw & -0.123 & -0.126 & \textcolor{Red}{\textbf{0.020}} \\
& (-0.126,-0.120) & (-0.132,-0.121) & (0.017,0.023) \\
GEM & -0.058 & -0.107 & -0.063 \\
& (-0.090,-0.026) & (-0.143,-0.070) & (-0.097,-0.030) \\
GEM-S & -0.046 & -0.114 & -0.070 \\
& (-0.079,-0.013) & (-0.149,-0.078) & (-0.104,-0.036) \\
\bottomrule
\end{tabular}
\caption{Standardized Mean Differences (SMD) of scores $d := (\mu' - \mu)/{\sigma_{\text{pooled}}}$ after manipulations of various evaluation metrics with 95\% confidence intervals in parentheses. Significant score increase ($p<0.05$) after manipulations are highlighted in bold red, implying the evaluation metric are \textcolor{Red}{\textbf{not robust}} against the manipulation.}
\label{table:manipulation}
\end{table}

\newpage
\subsection{Detailed Results of GRE-bench}\label{app:GRE-bench-detail-70b} 


\paragraph{GRE-bench vs. Other benchmarks} The ability to generate informative reviews relies on several key factors, including logical reasoning, critical thinking, and fact-checking. Various widely used benchmarks measure different aspects of LLM performance. To better understand which abilities are most critical for generating high-quality reviews, in the sense of GRE-bench scores, we compared our results with several other benchmarks.

We present the Spearman's correlation coefficients between the GRE-bench scores of all the models and their corresponding scores on other benchmarks\footnote{We collect benchmark data from online sources, such as technical reports of language models. Since some models have missing results for specific benchmarks, we exclude those entries when computing correlation. For more details, please refer to our code repository.} in Table~\ref{table:compare}, providing insights into which factors most strongly influence review generation quality. Notably, GRE-bench highly correlates with HellaSWAG and ARC-C which measures reasoning ability, and less correlates with benchmarks for coding ability (HumanEval) or math ability (MATH, GSM8K).

\begin{table}[!ht]
\small\centering\renewcommand{\arraystretch}{0.9}
\begin{tabular}{cccccccc}
\toprule
\textbf{Base Metric} & \textbf{MMLU} & \textbf{ARC-C} & \textbf{HellaSwag} & \textbf{GSM8K} & \textbf{MATH} & \textbf{HumanEval} & \textbf{GPQA}\\ 
\midrule
GEM & 0.55 & 0.68 & 0.74 & 0.58 & 0.37 & 0.36 & 0.60\\
GEM-S (abstract) & 0.66 & 0.70 & 0.82 & 0.73 & 0.43 & 0.43 & 0.67\\
GEM-S (ASSW) & 0.68 & 0.78 & 0.84 & 0.73 & 0.43 & 0.48 & 0.71\\
\bottomrule
\end{tabular}
\caption{Spearman's correlation coefficients between GRE-bench and other popular benchmarks.}
\label{table:compare}
\end{table} 

\paragraph{Consistent results over various Evaluation LMs.} {
The Spearman's correlation coefficients between the GRE-bench scores based on GEM metric computed using Llama-3.1 8B and that same score computed with Llama-3.1 70B is 0.93. This indicates that the score is largely robust with respect to the specific evaluation LM used. The analogous coefficient for the GRE-bench scores based on GEM-S (abstract) and GEM-S (ASSW) are 0.92 and 0.90 respectively. We further provide the comprehensive results of GRE-bench with Llama-3.1 8B in Table~\ref{table:GRE-bench-detail-8b}. This consistency shows that with our metrics, a small model with 8B parameters that can be deployed on a single NVIDIA L4 GPU, plus proper preprocessing, is still effective in evaluating the quality of judgments generated by much larger models, though larger evaluation LM may have better evaluation performance.}

\paragraph{Numerical results.} We present numerical results of various language models on the GRE-bench based on GEM, GEM-S (abstract), and GEM-S (ASSW) computed with Llama3.1 70B as the evaluation LM, alongside their estimated parameter sizes in Table~\ref{table:GRE-bench-detail-70b}. {Note that some estimations of the parameter sizes are from unofficial sources. This information is only to provide context, and none of our conclusions depend on these estimates.} The corresponding results of GRE-bench computed with Llama3.1 8B is shown in Table~\ref{table:GRE-bench-detail-8b}.

\newpage
\begin{table}[ht]
\centering
\begin{small}\begin{tabular}{l@{\hspace{3pt}}r@{\hspace{3pt}}c@{\hspace{3pt}}c@{\hspace{3pt}}c}
\toprule
\textbf{Model Name} & \textbf{Para. Size} & \textbf{GRE-bench} \\
~ & (in Billion) & GEM & GEM-S(abs.) & GEM-S(ASSW) \\
\midrule
deepseek/deepseek-chat & 671 & 19.27 & 6.88 & 1.74 \\
deepseek/deepseek-r1 & 671 & 17.28 & 7.23 & 1.94 \\
openai/gpt-4o-mini (v 20240718) & $\sim$ 8 & 16.31 & 5.25 & 0.52 \\
openai/gpt-4o (v 20240513) & $\sim$ 200 & 20.08 & 7.62 & 2.82 \\
openai/gpt-3.5-turbo (v 20240125) & $\sim$ 20 & 19.28 & 6.31 & 1.40 \\
anthropic/claude-2 & $\sim$ 137 & 20.04 & 7.17 & 2.27 \\
anthropic/claude-3-haiku (v 20240307) & $\sim$ 20 & 20.32 & 7.63 & 2.50 \\
anthropic/claude-3-sonnet (v 20240229) & $\sim$ 70 & 24.11 & 9.53 & 4.25 \\
anthropic/claude-3-opus (v 20240229) & $\sim$ 2000 & 25.77 & 11.52 & 5.89 \\
google/gemini-pro (v 002) & $\sim$ 20 & 14.73 & 4.09 & -0.26 \\
google/gemini-pro-1.5 (v 002) & $\sim$ 175 & 22.76 & 9.61 & 4.47 \\
meta-llama/llama-3-8b-instruct & 8 & 16.68 & 5.81 & 1.07 \\
meta-llama/llama-3-70b-instruct & 70 & 18.35 & 7.23 & 3.37 \\
mistralai/mixtral-8x7b-instruct (v 0.1) & 56 & 16.86 & 5.22 & 1.23 \\
mistralai/mixtral-8x22b-instruct (v 0.1) & 176 & 18.94 & 5.82 & 1.64 \\
meta-llama/llama-3.1-8b-instruct & 8 & 18.74 & 7.60 & 2.40 \\
meta-llama/llama-3.1-70b-instruct & 70 & 19.95 & 8.00 & 3.47 \\
mistralai/mistral-large (v 2407) & 123 & 20.40 & 7.44 & 2.72 \\
mistralai/mistral-small (v 2409) & 22 & 18.08 & 6.70 & 1.28 \\
mistralai/mistral-tiny (v mistral-7b-0.3) & 7 & 17.21 & 4.11 & 0.15 \\
\bottomrule
\end{tabular}\end{small}
\caption{Results of GRE-bench based on GEM, GEM-S (abstract), and GEM-S (ASSW) with Llama3.1 70B. In the column of parameter size, the symbol $\sim$ indicates that this size is estimated. For mix-of-expert (MoE) models, we show the total parameter size instead of the activated parameter size.}
\label{table:GRE-bench-detail-70b}
\end{table}


\begin{table}[ht]
\centering
\begin{small}\begin{tabular}{l@{\hspace{3pt}}r@{\hspace{3pt}}c@{\hspace{3pt}}c@{\hspace{3pt}}c}
\toprule
\textbf{Model Name} & \textbf{Para. Size} & \textbf{GRE-bench} \\
~ & (in Billion) & GEM & GEM-S(abs.) & GEM-S(ASSW) \\
\midrule
deepseek/deepseek-chat & 671 & 47.51 & 17.40 & 11.98 \\
deepseek/deepseek-r1 & 671 & 47.45 & 18.89 & 13.80 \\
openai/gpt-4o-mini (v 20240718) & $\sim$ 8 & 42.55 & 14.44 & 10.25 \\
openai/gpt-4o (v 20240513) & $\sim$ 200 & 48.01 & 18.73 & 12.78 \\
openai/gpt-3.5-turbo (v 20240125) & $\sim$ 20 & 45.62 & 15.86 & 10.72 \\
anthropic/claude-2 & $\sim$ 137 & 55.94 & 24.29 & 17.22 \\
anthropic/claude-3-haiku (v 20240307) & $\sim$ 20 & 49.77 & 19.58 & 13.34 \\
anthropic/claude-3-sonnet (v 20240229) & $\sim$ 70 & 56.49 & 24.16 & 16.52 \\
anthropic/claude-3-opus (v 20240229) & $\sim$ 2000 & 60.94 & 27.94 & 20.21 \\
google/gemini-pro (v 002) & $\sim$ 20 & 40.65 & 12.86 & 9.49 \\
google/gemini-pro-1.5 (v 002) & $\sim$ 175 & 52.94 & 22.43 & 16.06 \\
meta-llama/llama-3-8b-instruct & 8 & 43.50 & 15.49 & 11.24 \\
meta-llama/llama-3-70b-instruct & 70 & 46.39 & 18.60 & 13.68 \\
mistralai/mixtral-8x7b-instruct (v 0.1) & 56 & 45.18 & 15.61 & 11.25 \\
mistralai/mixtral-8x22b-instruct (v 0.1) & 176 & 47.29 & 17.03 & 11.85 \\
meta-llama/llama-3.1-8b-instruct & 8 & 46.51 & 18.33 & 12.69 \\
meta-llama/llama-3.1-70b-instruct & 70 & 48.29 & 19.59 & 13.91 \\
mistralai/mistral-large (v 2407) & 123 & 47.84 & 17.84 & 12.52 \\
mistralai/mistral-small (v 2409) & 22 & 45.01 & 15.90 & 11.10 \\
mistralai/mistral-tiny (v mistral-7b-0.3) & 7 & 43.90 & 13.34 & 9.94 \\
\bottomrule
\end{tabular}\end{small}
\caption{Results of GRE-bench based on GEM, GEM-S (abstract), and GEM-S (ASSW) with Llama3.1 8B. In the column of parameter size, the symbol $\sim$ indicates that this size is estimated. For mix-of-expert (MoE) models, we show the total parameter size instead of  activated parameter size.}
\label{table:GRE-bench-detail-8b}
\end{table}





\newpage
\subsection{{Consistent Results over Preprocessing Models}}\label{app:results-preprocessing-llama}

To answer the question \textit{``Does using an open-sourced preprocessing model (rather than GPT4o) still lead to effective GEM metric and similar GRE-bench results?''}, we use a Llama3.2-90B-Vision-Instruct model for preprocessing and re-run all validating experiments mentioned in Section~\ref{sec:validating}. We find that most experimental results remain valid except for two manipulation tests. The GRE-bench results also remain highly consistent with the results of using GPT4o preprocessing.

\paragraph{Positive Correlation with Human Annotation} GEM still shows a significant positive correlation with human annotation in the proposal peer grading dataset (Section~\ref{sec:result1}), however, the Spearman’s correlation coefficient between GEM scores and instructor grades drops from 0.433 (GPT4o preprocessing) to 0.274 (Llama3.2-90b preprocessing). 

\paragraph{Sensitivity to Degradation} In the degradation experiment (Section~\ref{sec:result2}), GEM with Llama3.2-90b preprocessing also shows significant sensitivity to all degradations and even performs better in “sentence deletion” than using GPT4o to preprocess. 

\begin{table}[htbp]
\small\centering\renewcommand{\arraystretch}{0.85}
\begin{tabular}{cccccc}
\toprule
\textbf{Evaluation Metric} & \textbf{Sentence Deletion} & \textbf{Deletion \& Completion} & \textbf{Abstract-only Review} \\ 
\midrule
GEM & \textcolor{OliveGreen}{\textbf{-0.566}} & \textcolor{OliveGreen}{\textbf{-0.317}} & \textcolor{OliveGreen}{\textbf{-0.078}} \\
(Llama Preproccessing) & (-0.625,-0.507) & (-0.371,-0.263) & (-0.154,-0.002)  \vspace{3pt}\\
GEMS & \textcolor{OliveGreen}{\textbf{-0.549}} & \textcolor{OliveGreen}{\textbf{-0.205}} & \textcolor{OliveGreen}{\textbf{-0.327}} \\
(Llama Preproccessing) & (-0.607,-0.491) & (-0.258,-0.153) & (-0.399,-0.254) \\
\bottomrule
\end{tabular}
\caption{Standardized Mean Differences (SMD) $d := (\mu' - \mu)/{\sigma_{\text{pooled}}}$ of scores after degradations of GEM and GEM-S metrics (preprocessing with Llama-3.2 90B) with 95\% confidence intervals in parentheses. Significant score decreases ($p<0.05$) after degradations are highlighted in bold green, implying the evaluation metric can \textcolor{OliveGreen}{\textbf{effectively}} penalize the degradation.}
\label{table:degradation-llama-preprocess}
\end{table}

\paragraph{Weaker Robustness than GPT-4o Preprocessing} In the manipulation experiments (Section~\ref{sec:result2}), GEM with Llama3.2-90b preprocessing is robust against “GPT4 rephrase” but fails in the “Llama-3.1 rephrase” and “Meaningless Elongation”. 

\begin{table}[htbp]
\small\centering\renewcommand{\arraystretch}{0.85}
\begin{tabular}{cccccc}
\toprule
\textbf{Evaluation Metric} & \textbf{GPT-4o Rephrase} & \textbf{Llama3.1 Rephrase} & \textbf{Meaningless Elongation} \\ 
\midrule
GEM & -0.132 & \textcolor{Red}{\textbf{0.124}} & \textcolor{Red}{\textbf{0.138}} \\
(Llama Preproccessing) & (-0.180,-0.084) & (0.075,0.173) & (0.086,0.189) \vspace{3pt}\\
GEM-S & -0.126 & \textcolor{Red}{\textbf{0.156}} & \textcolor{Red}{\textbf{0.112}} \\
(Llama Preproccessing) & (-0.173,-0.079) & (0.109,0.203) & (0.061,0.163) \\
\bottomrule
\end{tabular}
\caption{Standardized Mean Differences (SMD) $d := (\mu' - \mu)/{\sigma_{\text{pooled}}}$ of scores after manipulations of GEM and GEM-S metrics (preprocessing with Llama-3.2 90B) with 95\% confidence intervals in parentheses. Significant score increases ($p<0.05$) after manipulations are highlighted in bold red, implying the evaluation metrics are \textcolor{Red}{\textbf{not robust}} against the manipulation.}
\label{table:manipulation-llama-preprocess}
\end{table}

\paragraph{Consistent GRE-bench Results}
Based on the GEM metric with llama3.2-90b preprocessing, we re-compute the GRE-bench scores with ICLR 2023 dataset (Section~\ref{sec:benchmark}). The Spearman's correlation coefficient between the GRE-bench scores based on the GEM metric with Llama3.2-90B preprocessing and the scores with GPT-4o preprocessing shown in Table~\ref{table:GRE-bench-detail-8b} is 0.89. The analogous coefficient for the GRE-bench scores based on GEM-S (abstract) and GEM-S (ASSW) are 0.85 and 0.83 respectively. All detailed numerical results are provided in Table~\ref{table:GRE-bench-detail-llama-preprocessing}.

\newpage
\begin{table}[ht]
\centering
\begin{small}\begin{tabular}{l@{\hspace{3pt}}r@{\hspace{3pt}}c@{\hspace{3pt}}c@{\hspace{3pt}}c}
\toprule
\textbf{Model Name} & \textbf{Para. Size} & \textbf{GRE-bench} \\
~ & (in Billion) & GEM & GEM-S(abs.) & GEM-S(ASSW) \\
\midrule
openai/gpt-4o-mini (v 20240718) & $\sim$ 8 & 45.76 & 19.58 & 18.27 \\
openai/gpt-4o (v 20240513) & $\sim$ 200 & 50.17 & 23.12 & 20.55 \\
openai/gpt-3.5-turbo (v 20240125) & $\sim$ 20 & 48.56 & 20.92 & 19.04 \\
anthropic/claude-2 & $\sim$ 137 & 54.61 & 26.07 & 23.93 \\
anthropic/claude-3-haiku (v 20240307) & $\sim$ 20 & 52.70 & 24.87 & 22.18 \\
anthropic/claude-3-sonnet (v 20240229) & $\sim$ 70 & 54.83 & 25.82 & 22.78 \\
anthropic/claude-3-opus (v 20240229) & $\sim$ 2000 & 62.08 & 33.13 & 30.53 \\
google/gemini-pro (v 002) & $\sim$ 20 & 46.75 & 21.33 & 20.43 \\
google/gemini-pro-1.5 (v 002) & $\sim$ 175 & 56.36 & 27.40 & 24.44 \\
meta-llama/llama-3-8b-instruct & 8 & 47.36 & 20.96 & 19.86 \\
meta-llama/llama-3-70b-instruct & 70 & 55.62 & 29.61 & 27.98 \\
mistralai/mixtral-8x7b-instruct (v 0.1) & 56 & 50.53 & 22.83 & 21.08 \\
mistralai/mixtral-8x22b-instruct (v 0.1) & 176 & 51.57 & 23.84 & 21.79 \\
meta-llama/llama-3.1-8b-instruct & 8 & 51.04 & 24.51 & 22.28 \\
meta-llama/llama-3.1-70b-instruct & 70 & 50.26 & 23.68 & 21.73 \\
mistralai/mistral-large (v 2407) & 123 & 51.81 & 23.59 & 21.20 \\
mistralai/mistral-small (v 2409) & 22 & 50.22 & 22.99 & 20.58 \\
mistralai/mistral-tiny (v mistral-7b-0.3) & 7 & 48.42 & 20.04 & 18.86 \\
\bottomrule
\end{tabular}\end{small}
\caption{Results of GRE-bench based on GEM, GEM-S (abstract), and GEM-S (ASSW) with Llama3.2 90B preprocessing. In the column of parameter size, the symbol $\sim$ indicates that this size is estimated.}
\label{table:GRE-bench-detail-llama-preprocessing}
\end{table}

Overall, as more advanced models can preprocess reports more accurately, the GEM metric will be more accurate and manipulation-resistant with superior preprocessing. Considering the frequent release and update of LLMs, utilizing more advanced LLMs for preprocessing enhances both the accuracy and robustness of the GEM metric. Consequently, our GEM metrics stand to benefit from the continuous advancements in LLM technology.

\newpage
\subsection{{Validation on Yelp Restaurant Review Dataset}}\label{app:results-yelp}

To further validate our GEM metrics' effectiveness in evaluating judgments without gold-standard references in scenarios in addition to peer review. We use an open-access online business review data from Yelp, and repeat the same workflow in Section~\ref{sec:result2} and \ref{sec:result3}, to check how sensitive GEM metrics are to degradations and how robust GEM metrics are to manipulations. 

The results are similar to those observed in experiments on ICLR dataset (Section~\ref{sec:result2} and \ref{sec:result3}). Our GEM and GEM-S demonstrates significant sensitivity to degradations and robustness against manipulations.

\paragraph{Yelp Review} We randomly sample 300 restaurants from the entire dataset which have at least 50 reviews with at least 1000 characters. For each restaurant, we randomly sample 3 customer reviews among all reviews with at least 1000 characters. Yelp reviews quality varies much more than ICLR reviews, thus, we use length as a heuristic filter to get reasonable quality reviews.

\paragraph{GEM Setup} We use GPT-4o for text preprocessing, and use Llama-3.1 8B as the evaluation-LM. For GEM-S, we use the categories of the restaurant as the synopsis, which is given by the Yelp dataset.

\paragraph{GEM demonstrates sensitivity to degradations.} We reuse the \textit{Sentence Deletion} and \textit{Deletion \& Completion} degradation methods described in Section~\ref{sec:result2}. However, since restaurant reviews lack ``abstract'', the \textit{Abstract-only Review} degradation cannot be applied. Instead, we propose \textit{Random Replacement} degradation, where we degrade the original review $x_i$ by replacing it with a random review $x_i'$ of another randomly selected restaurant.

Table~\ref{table:degradation-yelp} presents the standardized mean differences (SMD) with 95\% confidence intervals for GEM and baseline metrics before and after these degradations. The results align with those observed in the ICLR dataset (Section~\ref{sec:result2}): GEM and GEM-S consistently exhibit significant SMD across all three degradations. Additionally, while LMExaminer is more sensitive to degradations like \textit{Sentence Deletion} and \textit{Deletion \& Completion}, which impair presentation quality, GEM shows greater sensitivity to the \textit{Random Replacement} degradation, which severely compromises semantic informativeness.

\begin{table}[htbp]
\small\centering\renewcommand{\arraystretch}{0.85}
\begin{tabular}{cccccc}
\toprule
\textbf{Evaluation Metric} & \textbf{Sentence Deletion} & \textbf{Deletion \& Completion} & \textbf{Random Replacement} \\ 
\midrule
BLEU & \textcolor{OliveGreen}{\textbf{-0.348}} & \textcolor{OliveGreen}{\textbf{-0.113}} & \textcolor{OliveGreen}{\textbf{-0.231}} \\
& (-0.436,-0.260) & (-0.200,-0.026) & (-0.355,-0.107) \\
ROUGE-L & \textcolor{OliveGreen}{\textbf{-0.677}} & 0.009 & \textcolor{OliveGreen}{\textbf{-0.345}} \\
& (-0.765,-0.589) & (-0.064,0.081) & (-0.462,-0.229) \\
BERTscore & \textcolor{OliveGreen}{\textbf{-0.318}} & 0.011 & \textcolor{OliveGreen}{\textbf{-0.722}} \\
& (-0.359,-0.276) & (-0.032,0.054) & (-0.836,-0.609) \\
BARTscore-F1 & 0.744 & 0.457 & -0.042 \\
& (0.705,0.784) & (0.427,0.487) & (-0.163,0.079) \\
BARTscore-precision & 0.877 & 0.536 & -0.023 \\
& (0.831,0.924) & (0.501,0.571) & (-0.159,0.113) \\
BARTscore-recall & \textcolor{OliveGreen}{\textbf{-0.015}} & \textcolor{OliveGreen}{\textbf{-0.014}} & \textcolor{OliveGreen}{\textbf{-0.046}} \\
& (-0.018,-0.012) & (-0.017,-0.011) & (-0.051,-0.041) \\
LMExaminer & \textcolor{OliveGreen}{\textbf{-1.245}} & \textcolor{OliveGreen}{\textbf{-0.466}} & \textcolor{OliveGreen}{\textbf{-0.204}} \\
& (-1.328,-1.162) & (-0.550,-0.382) & (-0.340,-0.068) \\
\midrule
GEM & \textcolor{OliveGreen}{\textbf{-0.386}} & \textcolor{OliveGreen}{\textbf{-0.238}} & \textcolor{OliveGreen}{\textbf{-0.516}} \\
& (-0.459,-0.314) & (-0.318,-0.159) & (-0.624,-0.408) \\
GEM-S & \textcolor{OliveGreen}{\textbf{-0.361}} & \textcolor{OliveGreen}{\textbf{-0.195}} & \textcolor{OliveGreen}{\textbf{-0.283}} \\
& (-0.436,-0.287) & (-0.277,-0.114) & (-0.385,-0.182) \\
\bottomrule
\end{tabular}
\caption{Standardized Mean Differences (SMD) $d := (\mu' - \mu)/{\sigma_{\text{pooled}}}$ of scores after degradations of various evaluation metrics with 95\% confidence intervals in parentheses. Significant score decreases ($p<0.05$) after degradations are highlighted in bold green, implying the evaluation metric can \textcolor{OliveGreen}{\textbf{effectively}} penalize the degradation.}
\label{table:degradation-yelp}
\end{table}

\paragraph{GEM demonstrates robustness to manipulation.} Similarly, we reuse the \textit{GPT-4o Rephrase} and \textit{Llama-3.1 Rephrase} manipulations to test whether our metric is robust against specific language styles. However, since Yelp restaurant reviews lack a structured format like ICLR peer reviews (as illustrated in Figure~\ref{fig:elongation}), it is unclear how to construct a suitable \textit{Meaningless Elongation} manipulation. 

Table~\ref{table:manipulation-yelp} shows the standardized mean differences (SMD) with 95\% confidence intervals for GEM and baseline metrics before and after these manipulations. The results remain similar to those observed in Section~\ref{sec:result3}. GEM metrics remain robust against both types of manipulation, while LMExaminer shows a significant increase in scores after LLM rephrasing.

\begin{table}[htbp]
\small\centering\renewcommand{\arraystretch}{0.85}
\begin{tabular}{cccccc}
\toprule
\textbf{Evaluation Metric} & \textbf{GPT-4o Rephrase} & \textbf{Llama3.1 Rephrase}\\ 
\midrule
BLEU & \textcolor{Red}{\textbf{0.100}} & 0.046 \\
& (0.014,0.185) & (-0.063,0.156) \\
ROUGE-L & 0.020 & \textcolor{Red}{\textbf{0.083}} \\
& (-0.037,0.076) & (0.008,0.157) \\
BERTscore & -0.016 & -0.098 \\
& (-0.041,0.008) & (-0.133,-0.064) \\
BARTscore-F1 & \textcolor{Red}{\textbf{0.316}} & \textcolor{Red}{\textbf{0.425}} \\
& (0.287,0.346) & (0.371,0.478) \\
BARTscore-precision & \textcolor{Red}{\textbf{0.371}} & \textcolor{Red}{\textbf{0.508}} \\
& (0.337,0.405) & (0.445,0.571) \\
BARTscore-recall & -0.012 & -0.016 \\
& (-0.013,-0.010) & (-0.018,-0.014) \\
LMExaminer & \textcolor{Red}{\textbf{0.300}} & \textcolor{Red}{\textbf{0.453}} \\
& (0.240,0.361) & (0.387,0.518) \\
\midrule
GEM & 0.021 & 0.010 \\
& (-0.035,0.076) & (-0.055,0.075) \\
GEM-S & 0.020 & -0.012 \\
& (-0.037,0.077) & (-0.080,0.055) \\
\bottomrule
\end{tabular}
\caption{Standardized Mean Differences (SMD) of scores $d := (\mu' - \mu)/{\sigma_{\text{pooled}}}$ after manipulations of various evaluation metrics with 95\% confidence intervals in parentheses. Significant score increases ($p<0.05$) after manipulations are highlighted in bold red, implying the evaluation metrics are \textcolor{Red}{\textbf{not robust}} against the manipulation.}
\label{table:manipulation-yelp}
\end{table}

\newpage
\subsection{{Exploration of Properties of LLM Estimated Distribution}}

An interesting question raised by our anonymous reviewer is:

\textit{As we know $\Pr[x,y]=\Pr[x]\cdot \Pr[y|x]=\Pr[y] \cdot \Pr[x|y]$, does this symmetry also applies to the LLM estimated probability, i.e., $\Pr_{\LLM}[x] \cdot \Pr_{\LLM}[y|x] = \Pr_{\LLM}[y] \cdot \Pr_{\LLM}[x|y]$?}

To answer this question, we randomly select 500 papers from ICLR 2023 and 1874 corresponding reviews in total. For each pair of reviews $x$, $y$ for the same paper, we use our method to compute $\log \Pr_{\LLM}[x|y]$, $\log \Pr_{\LLM}[y|x]$, $\log \Pr_{\LLM}[x]$, and $\log \Pr_{\LLM}[y]$. For those 5328 pairs, we compute the Spearman correlation coefficient between $\Pr_{\LLM}[x] \cdot \Pr_{\LLM}[x|y] := \exp \left( \log \Pr_{\LLM}[x|y] + \log \Pr_{\LLM}[y] \right)$ and $\Pr_{\LLM}[y] \cdot \Pr_{\LLM}[y|x] := \exp \left( \log \Pr_{\LLM}[y|x] + \log \Pr_{\LLM}[x] \right)$. The Spearman correlation coefficient is 0.943, which indicates that they are highly positively correlated, despite minor differences. Potential noise in the estimation process may account for these differences. 

The results revealed a high positive correlation between these two values, although they were not precisely equal. Despite this difference, the empirical results in our paper indicate that this does not negatively impact the desired properties, including the accuracy and manipulation resistance. 
\newpage
\section{Omitted Proofs}

\label{app:proofs}

\propGEM*

\begin{proof}[Proof of Proposition~\ref{prop:GEM}]

By the definition of estimated PMI, $\widehat{\PMI}(x;y) = \log \Pr_{\LLM}[Y = y \mid X=x] - \log \Pr_{\LLM}[Y = y]$, we have
\begin{align*}
\widehat{I}(X;Y)  & = \frac{1}{n} \sum_{i=1}^{n} \widehat{\PMI}(x_i;y_i) \\
& = \frac{1}{n} \sum_{i=1}^{n}  \log \Pr_{\LLM}[Y = y_i \mid X=x_i] - \log \Pr_{\LLM}[Y = y_i] \\
& = \frac{1}{n} \sum_{i=1}^{n}  \log \Pr_{\LLM}[Y = y_i \mid X=x_i] - \frac{1}{n} \sum_{i=1}^{n} \log \Pr_{\LLM}[Y = y_i].
\end{align*}

Note that $\frac{1}{n} \sum_{i=1}^{n} \log \Pr_{\LLM}[Y = y_i]$ does not change whatever the candidate response follows $X_H$ or $X_L$. In addition, recall that we assume that all task-response tuples $(w_i,x_i,y_i)$ are i.i.d. sampled from the underlying distribution $\mathcal{D}$. Therefore, with the law of large numbers, we have $\widehat{I}(X_H;Y) \geq \widehat{I}(X_L;Y) - \epsilon$ further equivalent to
\begin{align*} 
\E\left[\log \Pr_{\LLM}[Y = y \mid X=x_{H}]\right] \geq \E\left[\log \Pr_{\LLM}[Y = y \mid X=x_{L}]\right] - \epsilon.
\end{align*}

On the left hand side, we have
\begin{align*}
& \E\left[\log \Pr_{\LLM}[Y = y \mid X=x_{H}]\right] \\
= & \sum_{x_{H}} \Pr[X_H=x_{H}] \sum_{y}\Pr[Y=y \mid X_H=x_{H}]\log \Pr_{\LLM}[Y=y \mid X=x_{H}]\\
\geq &\sum_{x_{H}} \Pr[X_H=x_{H}] \left(\sum_{y}\Pr[Y=y \mid X_H=x_{H}]\log \Pr[Y=y \mid X_H=x_{H}]-\epsilon\right) \tag{This step follows the condition of bounded KL-divergence in Proposition~\ref{prop:GEM}.}\\
= & \sum_{x_{H}, y}\Pr[X_H=x_{H},Y=y]\log \Pr[Y=y \mid X_H=x_{H}]-\epsilon\\
= &-H(Y \mid X_H)-\epsilon\\
= & I(Y;X_H) - H(Y) - \epsilon.
\end{align*}

On the right hand side, we have
\begin{align*}
& \E\left[\log \Pr_{\LLM}[Y = y \mid X=x_{L}]\right] \\
= & \sum_{x_{L}} \Pr[X_L=x_{L}] \sum_{y}\Pr[Y=y \mid X_L=x_{L}]\log \Pr_{\LLM}[Y=y \mid X=x_{L}]\\
\leq &\sum_{x_{L}} \Pr[X_L=x_{L}] \sum_{y}\Pr[Y=y \mid X_L=x_{L}]\log \Pr[Y=y \mid X_L=x_{L}] \tag{This step follows the fact that logarithm is a proper scoring rule \citep{hendrickson1971proper}.} \\
= & -H(Y \mid X_L)\\
= & I(Y;X_L) - H(Y).
\end{align*}

Since $X_H$ follows an information structure $\sigma_H$ Blackwell dominates $\sigma_L$ that $X_L$ follows, by the information process inequality, we have $ I(Y;X_H) > I(Y;X_L)$.

Therefore, combining all these together, we have \[\E\left[\log \Pr_{\LLM}[Y = y \mid X=x_{H}]\right] \geq \E\left[\log \Pr_{\LLM}[Y = y \mid X=x_{L}]\right] - \epsilon\]
and consequently, we have $\widehat{I}(X_H;Y) \geq \widehat{I}(X_L;Y) - \epsilon$.

\end{proof}

\begin{restatable}{proposition}{propGEMS}\label{prop:GEMS}
For a given random variable $Z$, when the KL-divergence between the LLM estimated distribution and the underlying distribution satisfies
\[
D_{KL}\left[\Pr_{LLM}[Y=\cdot \mid X=x_H, Z = z] \Big\| \Pr[Y=\cdot \mid X_H=x_H, Z = z]\right] < \epsilon, ~\forall x_H
\]
For the two candidates $H$ and $L$ discussed above, and the information structure of $H$ Blackwell dominates $L$'s, when the size of dataset $n$ goes to infinity, we have 
\[
\widehat{I}(X_H;Y \mid Z) \geq \widehat{I}(X_L;Y \mid Z) - \epsilon
\]
\end{restatable}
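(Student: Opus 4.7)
The plan is to mirror the proof of Proposition~\ref{prop:GEM} step by step, with every quantity additionally conditioned on the random variable $Z$. First I would expand the definition of $\widehat{I}(X;Y \mid Z) = \frac{1}{n}\sum_i \widehat{\PMI}(x_i; y_i \mid z_i)$ using
\[
\widehat{\PMI}(x;y\mid z) = \log \Pr_{\LLM}[Y=y \mid X=x, Z=z] - \log \Pr_{\LLM}[Y=y \mid Z=z],
\]
and observe that the marginal term $\frac{1}{n}\sum_i \log \Pr_{\LLM}[Y=y_i \mid Z=z_i]$ is identical whether the candidate response is drawn from $X_H$ or $X_L$, so it cancels out of the inequality to be shown. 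Therefore, after applying the law of large numbers to the i.i.d.\ samples, the claim reduces to
\[
\E\bigl[\log \Pr_{\LLM}[Y \mid X_H, Z]\bigr] \geq \E\bigl[\log \Pr_{\LLM}[Y \mid X_L, Z]\bigr] - \epsilon.
\]

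Next, I would bound each side by the corresponding conditional entropies. For the left-hand side, I apply the conditional KL-divergence hypothesis pointwise in $(x_H,z)$: since $D_{KL}\bigl[\Pr_{\LLM}[Y=\cdot\mid X=x_H,Z=z] \,\|\, \Pr[Y=\cdot\mid X_H=x_H, Z=z]\bigr] < \epsilon$, replacing $\log \Pr_{\LLM}$ by $\log \Pr$ loses at most $\epsilon$ in expectation, yielding a lower bound of $-H(Y\mid X_H, Z) - \epsilon = I(X_H;Y\mid Z) - H(Y\mid Z) - \epsilon$. For the right-hand side, I invoke the fact that logarithmic scoring is a proper scoring rule: pointwise in $(x_L,z)$, the expected log-score under the true conditional is maximized when the reported distribution equals the true conditional, so $\E[\log \Pr_{\LLM}[Y \mid X_L, Z]] \leq -H(Y\mid X_L, Z) = I(X_L;Y\mid Z) - H(Y\mid Z)$.

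To close the argument, I would appeal to the conditional data processing inequality: because $\sigma_H$ Blackwell-dominates $\sigma_L$, there is a stochastic map $\Gamma$ with $X_L = \Gamma(X_H)$ (independent of $Z$ and $Y$ given $X_H$), so $Y \to X_H \to X_L$ forms a Markov chain conditional on $Z$. This yields $I(X_H; Y\mid Z) \geq I(X_L; Y\mid Z)$. Combining this with the two bounds from the previous paragraph gives the required inequality after the $-H(Y\mid Z)$ terms cancel.

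The main obstacle I anticipate is the last step: verifying that Blackwell domination lifts cleanly to the conditional Markov-chain structure needed for the conditional data processing inequality. In particular, one must check that the stochastic map $\Gamma$ witnessing Blackwell domination does not depend on $Z$ (or is conditionally independent of $Z$ given $X_H$), so that conditioning on $Z$ preserves the Markov property. Under the standard interpretation where the information structures $\sigma_H,\sigma_L$ describe the candidate's response to the task $W$ (and $Z$ is a function of $W$), this is immediate, but it is worth stating explicitly; everything else is a routine conditional analogue of the unconditional proof.
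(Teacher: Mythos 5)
Your proposal is correct and follows the same structure as the paper's proof: cancel the marginal term, invoke the law of large numbers, lower-bound the $X_H$ side via the KL hypothesis, upper-bound the $X_L$ side via the properness of the log score, and close with the conditional data processing inequality. The one point you raise that the paper silently skips --- checking that the Blackwell kernel $\Gamma$ is conditionally independent of $Z$ given $X_H$, so that $Y \to X_H \to X_L$ remains Markov after conditioning on $Z$ --- is a genuine gap in the paper's terse proof, and your resolution (that $\Gamma$ acts on $X_H$ alone, while $Z$ is a function of $W$, hence the coupling $X_L = \Gamma(X_H)$ reproduces the joint law of $(W, X_L, Y)$ and gives the conditional Markov chain) is the right way to fill it.
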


\begin{proof}
By following similar steps of the proof of Proposition~\ref{prop:GEM}, we reduce the problem to showing
\begin{align*} 
\E\left[\log \Pr_{\LLM}[Y = y \mid X=x_{H}, Z=z]\right] \geq \E\left[\log \Pr_{\LLM}[Y = y \mid X=x_{L}, Z=z]\right] - \epsilon.
\end{align*}
Again, following similar steps, we have that the left hand side follows
\begin{align*}
& \E\left[\log \Pr_{\LLM}[Y = y \mid X=x_{H}, Z=z]\right] \\
\geq & \sum_{z} \Pr[Z=z] \left(I(Y;X_H\mid Z = z) - H(Y \mid Z=z) - \epsilon\right)\\
= & I(Y;X_H\mid Z) - H(Y \mid Z) - \epsilon
\end{align*}
Similarly, we have the right hand side
\begin{align*}
& \E\left[\log \Pr_{\LLM}[Y = y \mid X=x_{L}, Z=z]\right] \\
\leq & \sum_{z} \Pr[Z=z] \left(I(Y;X_L\mid Z = z) - H(Y \mid Z=z) - \epsilon\right)\\
= & I(Y;X_L\mid Z) - H(Y \mid Z)
\end{align*}

Therefore, combining all these together, we have
\[
\widehat{I}(X_H;Y \mid Z) \geq \widehat{I}(X_L;Y \mid Z) - \epsilon
\]
\end{proof}
\newpage
\section{Implementation Details}\label{app:implement}

In this section, we provide a detailed interpretation of how we implement our proposed metrics to facilitate the replication of our results. This includes comprehensive descriptions of all prompts utilized for generating peer-review judgments, preprocessing the judgments, predicting the judgments, as well as other details that help reproduce the implementation. {In all LLM API calls in our experiments, for reproducibility, we keep the temperature at 0 and the maximum (output) token count at 4000, unless otherwise specified.}

\subsection{Judgments Preprocessing}

To filter out the shortcuts in the review judgments, we use a preprocessing procedure. We employ an LLM to rewrite the original review judgments in a certain compressed format, eliminating the impact of various aspects such as semantics, syntax, and language styles. Here, we employ the LLM \textsf{GPT-4o} (specifically, \textsf{gpt-4o-2024-05-13}) to preprocess the judgments generated by all candidate LLMs (and human reviewers). Here is the prompt we use:

\begin{tcolorbox}[enhanced, colback=gray!7, frame hidden,  parbox=false]\begin{scriptsize}
\begin{small}\textbf{System Prompt}\end{small}

Carefully read the text of a scientific paper review. You should summarize each evaluation in the review in a separate line. Begin each summary line with one of the following phrases: 'The reviewer appreciates', 'The reviewer criticizes', 'The reviewer questions', 'The reviewer suggests'. You need to keep the summary as concise as possible, excluding specific details about the paper's content, such as topics, ideas, methods, findings, and any mathematical symbols.

You should ensure that even if multiple evaluations are mentioned in the same sentence in the original review, you should still split it into separate lines. For example, you should not output a line like 'The reviewer appreciates the well-written paper and good experimental performance'. In contrast, you should output 'The reviewer appreciates the well-written paper' and 'The reviewer appreciates good experimental performance' in two lines.

\smallskip\begin{small}\textbf{User Prompt}\end{small}

\{\texttt{Original Review Judgments}\}

\end{scriptsize}\end{tcolorbox}

\subsection{Judgments Prediction}

We compute our metric, the Generative Estimator for Mutual Information (GEM), by utilizing both the original and fine-tuned versions of \textsf{Llama-3.1-8B-Instruct}\footnote{We employ 4-bit quantization.} to predict the conditional probability $\Pr_{\text{LLM}}[Y=y \mid X=x]$ and the marginal probability $\Pr_{\text{LLM}}[Y=y]$. In this context, $x$ and $y$ represent preprocessed review judgments. The same methodology is employed for calculating the GEM-S score. Below is the prompt used for prediction.

\begin{tcolorbox}[enhanced, colback=gray!7, frame hidden,  parbox=false]\begin{scriptsize}
\begin{small}\textbf{System Prompt}\end{small}

You are the second reviewer for a scientific paper. You are given the abstract of the paper and a list of review judgments from the first reviewer, starting with 'The reviewer appreciates/criticizes/questions/suggests'. Your task is to provide your own judgments of the paper based on the given materials. You should create a separate line for each judgment you have, starting with 'The reviewer appreciates/criticizes/questions/suggests'. Ensure your judgments are concise, excluding specific details about the paper's content.

\smallskip\begin{small}\textbf{User Prompt}\end{small}

[Abstract of the paper]

\{\texttt{Abstract of the Paper} if the metric is GEM-S, and ``Not Available'' if the metric is GEM\}

[Review judgments from the first reviewer]

\{\texttt{Preprocessed Review Judgments ($x$)} for conditional probability, and ``Not Available'' for marginal probability\}

\smallskip\begin{small}\textbf{Forced LLM Output}\end{small}

\{\texttt{Preprocessed Review Judgments ($y$)}\}
\end{scriptsize}\end{tcolorbox}

\subsection{Peer-review Judgments Generation in GRE-bench}\label{app:review-generation}

{We provide the implementation details of GRE-bench where we employ academic peer review as the task to evaluate LLMs' ability to generate informative judgments. Following \citet{liang2023large}'s research about LLM-generated peer review, we use the Sciencebeam pdf parser to convert the ICLR submissions in pdf format to text format, and removed the references and all appendices. When using the sciencebeam PDF parser, it retains all image and table titles and captions while disregarding other content. While a few ICLR submissions require OCR before parsing into text (5 out of all submissions in 2023), none of the 300 papers we experimented with required OCR, ensuring no OCR errors. Additionally, we have verified that the parsed text of papers in our dataset does not exceed the input token limit of the LLM.}

{While we acknowledge that PDF parsing is an inherently imperfect process that may occasionally introduce artifacts or miss certain elements, our experimental design mitigates potential biases because all LLM models in our evaluation receive identical parsed versions of each paper, and any parsing inconsistencies would affect all models equally, making the relative performance comparisons robust to such noise.}

Below is the prompt utilized to request each candidate LLM to produce its review judgments.

\begin{tcolorbox}[enhanced, colback=gray!7, frame hidden,  parbox=false]\begin{scriptsize}
\begin{small}\textbf{System Prompt}\end{small}

You are given a paper submission for a top-tier Machine Learning conference which you need to write a detailed review. Please read the paper carefully. Once you have finished reading, please provide the following in your review:

First, write a concise summary of the key points and contributions of the paper inside \texttt{<}summary\texttt{>} tags.

Next, think critically about the strengths and weaknesses of the submission. Inside \texttt{<}strengths\texttt{>} tags, give a numbered list of at least 4 key reasons why this paper should potentially be accepted to the conference. For each reason, use sub-bullet points to provide detailed arguments and evidence from the paper to support that reason.

Then, inside \texttt{<}weaknesses\texttt{>} tags, give a numbered list of at least 4 key reasons why this paper should potentially be rejected from the conference. Again, for each reason, use sub-bullet points to provide detailed arguments and evidence from the paper to support that reason.

After weighing the reasons for and against, think of some open questions you have about the work. List your questions inside \texttt{<}questions\texttt{>} tags.

Remember, as a reviewer your job is to rigorously evaluate the strengths and weaknesses of the work and to provide critical but constructive feedback to the authors. Be thorough, specific and detailed in your arguments and feedback. Highlight both the positives and negatives you see, and justify your points carefully with reference to the content of the paper.

\smallskip\begin{small}\textbf{User Prompt}\end{small}

\{\texttt{Full paper in Text Format}\}

\end{scriptsize}\end{tcolorbox}

\subsection{Summary of Author-stated Strengths and Weaknesses}\label{app:assw}

In GEM-S (ASSW), we need to embed the strength and weakness text stated by the authors into the input of LLM. To do this, we employ \textsf{gpt-4o} to summarize it using the following prompt.

\begin{tcolorbox}[enhanced, colback=gray!7, frame hidden,  parbox=false]\begin{scriptsize}
\begin{small}\textbf{System Prompt}\end{small}

You are given a paper submission for a top-tier Machine Learning conference. Your goal is to identify and list the strengths and weaknesses that the paper claims about itself. This task requires careful reading of the paper.

Please follow these steps to complete the task:

1. Carefully read the entire paper submission. As you read, identify instances where the authors mention strengths or positive aspects of their research, methodology, results, or contributions. These are the strengths claimed by the paper. Also, identify instances where the authors mention limitations, weaknesses, or areas for future improvement in their work. These are the weaknesses claimed by the paper.

2. Compile your findings into two separate lists: one for strengths and one for weaknesses.

3. For each list, write each point on a separate line, keeping it concise. Add an extra blank line between each point for clarity.

4. Format your output as follows:

<strengths\_claimed\_by\_the\_paper>

[List each strength claimed by the paper in separate lines, with an extra blank line between each point]

</strengths\_claimed\_by\_the\_paper>

<weaknesses\_claimed\_by\_the\_paper>

[List each weakness claimed by the paper in separate lines, with an extra blank line between each point]

</weaknesses\_claimed\_by\_the\_paper>

Important: Focus only on the strengths and weaknesses that the paper claims about itself. Do not include your own evaluation or opinion of the paper's merits or shortcomings. Do not include the strengths and weaknesses of the baseline. Your task is to report what the authors themselves have stated about their work's strengths and limitations.

\smallskip\begin{small}\textbf{User Prompt}\end{small}

\{\texttt{Full paper in Text Format}\}

\end{scriptsize}\end{tcolorbox}

\newpage
\section{Experimental Details}\label{app:experiment}

In this section, we provide detailed information regarding the experiments we conducted. This includes the specific processes used to degrade and manipulate reviews, as well as the implementation of the baseline metrics.

\subsection{Baseline Metrics}\label{app:baselines}

In the paper, we use BLEU, ROUGE-L, BERTScore and LMExaminer as the baseline metrics. BLEU and ROUGE-L are both detail free. We employ NLTK as the tokenizer and utilized existing library functions to compute the BLEU and ROUGE-L scores. 

\textit{BERTScore} \citep{zhang2019bertscore} evaluates generated text by measuring the cosine similarity between token embeddings from a pre-trained language model. Instead of BERT, we use a recent model stella\_en\_400M\_v5 \citep{zhang2024stella} as the evaluation-LM for better token embedding performance.

\textit{BARTScore} \citep{yuan2021bartscore} is a probability-based metric. It has three variants, precision, recall, and F1-score. Instead of BART, we use the state-of-the-art Llama3.1-8b as the evaluation-LM for estimating the probability, which is the same as our GEM metrics.

\textit{LMExaminer} \citep{bai2024benchmarking} is one of the recently popular methods where a language model (LM) is used to evaluate the quality of generated text based on specific evaluation criteria, serving as an examiner. With empirical evidence suggesting that GPT-4 outperforms open-source models and even finetuned models of pointwise grading on AlignBench \citep{ke2023critiquellm}, we use GPT-4o as a baseline in our experiment. For peer review tasks, our prompt adopts criteria based on Review Quality Indicators (RQIs) \citep{goldberg2023peer,van1999development,superchi2019tools} including four aspects, understanding, coverage, substantiation, constructiveness.

We input the full text of the paper along with the review to be judged into the LLM and prompt it to score based on RQIs. We show our prompts below.

\begin{tcolorbox}[enhanced, colback=gray!7, frame hidden,  parbox=false, before upper={\setlength{\parindent}{0pt}\obeylines}]\begin{scriptsize}
\begin{small}\textbf{System Prompt}\end{small}

You are an expert tasked with evaluating the quality of a review for a Machine Learning paper. Your goal is to assess how well the review critiques the paper and provides valuable feedback to the authors. The paper will be given after '[Paper]' and the review will be given after '[Review]'.

To judge the quality of this review, consider the following criteria:

1. Understanding: Does the reviewer demonstrate a clear understanding of the paper's main contributions, methodology, and results?

2. Coverage: Does the review address all major aspects of the paper, including the problem statement, methodology, experiments, results, and conclusions?

3. Substantiation: Does the reviewer provide specific examples or references from the paper to support their comments and criticisms?

4. Constructiveness: Does the review offer helpful suggestions for improvement or identify areas where the paper could be strengthened?

For each criterion, provide a detailed analysis of how well the review meets the standard.

After analyzing each criterion, provide an overall assessment of the review's quality. Consider how well it serves its purpose of offering valuable feedback to the authors.

Finally, assign a score to the review on a scale of 0 to 10, where 0 is the lowest quality, 5 is the average quality, and 10 is the highest quality.

Present your evaluation in the following format:
\texttt{<}analysis\texttt{>}
[Your detailed analysis here]
\texttt{<}/analysis\texttt{>}
\texttt{<}overall\_assessment\texttt{>}
[Your overall assessment here]
\texttt{<}/overall\_assessment\texttt{>}
\texttt{<}overall\_score\texttt{>}
[Your final quality score here]
\texttt{<}/overall\_score\texttt{>}

\smallskip\begin{small}\textbf{User Prompt}\end{small}

[Paper]

\{\texttt{Full paper in Text Format}\}

[Review]

\{\texttt{Original Review Judgments}\}

\end{scriptsize}\end{tcolorbox}

\subsection{Validation Workflow for Degradations and Manipulations}\label{app:validation-workflow}

\begin{algorithm}[ht]
\caption{Validation Workflow}
\label{alg:evaluation}

\KwIn{A dataset $D$ with $n$ tuples of tasks and associated text responses. An evaluation metric $f$ computing the scores. A degradation/manipulation strategy $M$.}
\KwOut{Statistics metrics.}

\BlankLine

\For{$i = 1$ \KwTo $n$}{
    Get the $i$-th tuple from the dataset: task $w_i$, candidate response $x_i$, and reference response $y_i$ \;
    Compute $s_i := f(w_i,x_i,y_i)$\;
    
    \BlankLine

    Replace the response $x_i$ with $x_i'$ according to degradation/manipulation strategy $M$\;
    Compute $s_i' := f(w_i,x_i',y_i)$\;
    
    \BlankLine
}

Compute the means $\mu$,$\mu'$ and standard deviations $\sigma$,$\sigma'$ of $\{s_i\}_{i\in[n]}$ and $\{s_i'\}_{i\in[n]}$ respectively.
\end{algorithm}

\subsection{Degradations}\label{app:degradations}

\subsubsection{Review Degradation: Sentence Deletion}

The ICLR 2023 review comments must follow a specific formatting with four sections including ``Summary Of The Paper'', ``Strengths And Weaknesses'', ``Clarity, Quality, Novelty And Reproducibility'' and ``Summary Of The Review''. When performing sentence deletion, we maintain the original format unchanged. Other sentences are identified based on periods, question marks, exclamation points, and line breaks, which mark the end of a sentence. In each section, we retain all sentences with odd-numbered positions and delete those in even-numbered positions. Below is a toy example for illustration.

\begin{tcolorbox}[enhanced, colback=gray!14, frame hidden,  parbox=false]\setlength{\parskip}{2pt}\begin{scriptsize}
\begin{small}\textbf{Before Deletion}\end{small}

Summary Of The Paper:

This is the first sentence. This is the second sentence. This is the third sentence.

Strengths And Weaknesses:

This is the first sentence. This is the second sentence. This is the third sentence. This is the fourth sentence.

Clarity, Quality, Novelty And Reproducibility:

This is the first sentence.

Summary Of The Review:

This is the first sentence. This is the second sentence.

\smallskip\begin{small}\textbf{After Deletion}\end{small}

Summary Of The Paper:

This is the first sentence. This is the third sentence.

Strengths And Weaknesses:

This is the first sentence. This is the third sentence.

Clarity, Quality, Novelty And Reproducibility:

This is the first sentence.

Summary Of The Review:

This is the first sentence.

\end{scriptsize}\end{tcolorbox}

\subsubsection{Review Degradation: Deletion \& Completion}

When applying the deletion \& completion method to degrade reviews, we initially delete half of the sentences, similar to the sentence deletion process, and replace the omitted content with ``[There is one missing sentence]''. Subsequently, we utilize \textsf{GPT-4o} to complete these sentences. Below is a toy example.

\begin{tcolorbox}[enhanced, colback=gray!14, frame hidden,  parbox=false]\setlength{\parskip}{2pt}\begin{scriptsize}
\begin{small}\textbf{Before Deletion}\end{small}

Summary Of The Paper:

This is the first sentence. This is the second sentence. This is the third sentence.

Strengths And Weaknesses:

This is the first sentence. This is the second sentence. This is the third sentence. This is the fourth sentence.

Clarity, Quality, Novelty And Reproducibility:

This is the first sentence.

Summary Of The Review:

This is the first sentence. This is the second sentence.

\smallskip\begin{small}\textbf{After Deletion}\end{small}

Summary Of The Paper:

This is the first sentence. [There is one missing sentence] This is the third sentence

Strengths And Weaknesses:

This is the first sentence. [There is one missing sentence] This is the third sentence. [There is one missing sentence] 

Clarity, Quality, Novelty And Reproducibility:

This is the first sentence. 

Summary Of The Review:

This is the first sentence. [There is one missing sentence]
\end{scriptsize}\end{tcolorbox}

\begin{tcolorbox}[enhanced, colback=gray!7, frame hidden,  parbox=false, before upper={\setlength{\parindent}{0pt}\obeylines}]\begin{scriptsize}

Here, we present the prompt we utilize to leverage \textsf{GPT-4o} for completing the deleted sentences.

\begin{small}\textbf{System Prompt}\end{small}

You are tasked with completing missing sentences of a peer review evaluation given by the user while keeping all its existing sentences. Your goal is to complete all missing sentences indicated by [There is one missing sentence] of this peer review evaluation, following these rules:

- Insert new sentences that don't contribute significant information.
- Place these sentences between existing sentences where they seem most natural.
- Keep the overall language style similar.
- Ensure these added sentences don't contradict or substantially alter the original content.

Remember to maintain the essence and order of the original content while completing all missing sentences indicated by [There is one missing sentence].

\smallskip\begin{small}\textbf{User Prompt}\end{small}

\{\texttt{Judgments after Deletion}\}

\end{scriptsize}\end{tcolorbox}

\subsubsection{Review Degradation: Abstract-only Review}

In this degradation process, we utilize \textsf{claude-3-sonnet} to generate a review based solely on the paper's abstract, which serves as a substitute for the original review. Below is the prompt we use to complete this task.

\begin{tcolorbox}[enhanced, colback=gray!7, frame hidden,  parbox=false, before upper={\setlength{\parindent}{0pt}\obeylines}]\begin{scriptsize}
\begin{small}\textbf{System Prompt}\end{small}

You are given an abstract of a paper submission for a top-tier Machine Learning conference. You need to write a detailed peer review of the paper with only the abstract. Please read the abstract carefully. Once you have finished reading, please provide the following in your review:

First, write a concise summary of the key points and contributions of the paper in ``Summary Of The Paper" section.

Next, in the ``Strength And Weaknesses" section, think critically about the strengths and weaknesses of the submission. Give a numbered list of at least 4 key reasons why this paper should potentially be accepted to the conference. For each reason, use sub-bullet points to provide detailed arguments and evidence from the paper to support that reason. Then, give a numbered list of at least 4 key reasons why this paper should potentially be rejected from the conference. Again, for each reason, use sub-bullet points to provide detailed arguments and evidence from the paper to support that reason. After weighing the reasons for and against, think of some questions you have about the work.

Then, finish the ``Clarity, Quality, Novelty And Reproducibility" and ``Summary Of The Review" sections.

Remember, as a reviewer your job is to rigorously evaluate the strengths and weaknesses of the work and to provide critical but constructive feedback to the authors. Be thorough, specific and detailed in your arguments and feedback. Highlight both the positives and negatives you see, and justify your points carefully with reference to the content of the paper.

\smallskip\begin{small}\textbf{User Prompt}\end{small}

\{\texttt{Abstract of the Paper}\}

\end{scriptsize}\end{tcolorbox}

\subsection{Manipulations}\label{app:manipulations}

\subsubsection{Review Manipulation: LLM Rephrasing}

To rephrase the review without altering the semantics, we employ \textsf{GPT-4o} and \textsf{Llama-3.1-instruct} to rewrite the original review judgments, modifying the phrasing while maintaining the original meaning. The same prompt is used for both models to accomplish this task.

\begin{tcolorbox}[enhanced, colback=gray!7, frame hidden,  parbox=false, before upper={\setlength{\parindent}{0pt}\obeylines}]\begin{scriptsize}
\begin{small}\textbf{System Prompt}\end{small}

You are tasked with rewriting a peer review evaluation. You should follow these guidelines: 

1. Maintain the overall structure and organization of the review.
2. Improve the writing and make the language more natural and native-sounding. 
3. Correct any grammatical errors or awkward phrasing. 

Remember to maintain the overall structure and content of the original review, but aim to enhance its readability and fluency.

\smallskip\begin{small}\textbf{User Prompt}\end{small}

\{\texttt{Original Review Judgments}\}

\end{scriptsize}\end{tcolorbox}

\subsubsection{Review Manipulation: Meaningless Elongation}

We employ a verbose and uninformative template content alongside summaries of the existing review to achieve a meaningless elongation of the original review judgments. Specifically, we adhere to the fixed review format established by ICLR 2023, which consists of four sections. At the beginning of each content section within these sections, we incorporate an identical introductory statement. Additionally, we utilize \textsf{GPT-4o} to generate a one-sentence summary for the ``Strengths and Weaknesses'' section of the review, embedding this summary within the added text. Below is an example, including the meaningless content we add to the original review.

\begin{tcolorbox}[enhanced, colback=gray!7, frame hidden,  parbox=false, before upper={\setlength{\parindent}{0pt}\obeylines}]\setlength{\parskip}{2pt}\begin{scriptsize}
\begin{small}\textbf{Before Manipulation}\end{small}

Summary Of The Paper:

This is the first sentence. This is the second sentence. This is the third sentence.

Strengths And Weaknesses:

This is the first sentence. This is the second sentence. This is the third sentence. This is the fourth sentence.

Clarity, Quality, Novelty And Reproducibility:

This is the first sentence.

Summary Of The Review:

This is the first sentence. This is the second sentence.

\smallskip\begin{small}\textbf{After Manipulation}\end{small}

Summary Of The Paper:

This section provides an overview of the key contributions, methodologies, and findings presented in the paper. It summarizes the main arguments and highlights the scope and significance of the research conducted. Specifically, This is the first sentence. This is the second sentence. This is the third sentence.

Strengths And Weaknesses:

In this section, I will discuss the strengths and weaknesses of the paper, focusing on its contributions to the field, methodological rigor, and areas where improvements could be made. The aim is to provide constructive feedback that can help enhance the quality of the work. This paper has several notable strengths, including \{\texttt{Strength Summary}\}. However, there are some areas that could be improved, such as \{\texttt{Weakness Summary}\}. Specifically, I discuss the strengths and weaknesses of the paper as following.

This is the first sentence. This is the second sentence. This is the third sentence. This is the fourth sentence.

Clarity, Quality, Novelty And Reproducibility:

I will evaluate the clarity of the paper’s presentation, the quality of the research, the novelty of the findings, and the reproducibility of the experiments. This assessment will address whether the research is presented in a clear and coherent manner, offers new insights, and can be replicated based on the provided information. This is the first sentence.

Summary Of The Review:

This summary encapsulates the main points of the review, reflecting on the paper’s contributions, strengths, and areas for improvement. It provides a balanced overview of the paper’s impact and offers final recommendations for the authors. This is the first sentence. This is the second sentence.
\end{scriptsize}\end{tcolorbox}

The prompts that request \textsf{GPT-4o} to generate the \texttt{Strength Summary} and \texttt{Weakness Summary} are presented as follows.

\begin{tcolorbox}[enhanced, colback=gray!7, frame hidden,  parbox=false, before upper={\setlength{\parindent}{0pt}\obeylines}]\begin{scriptsize}
\begin{small}\textbf{System Prompt}\end{small}

You are given a peer review of a scientific paper, please identify two key \{``strengths'' or ``weaknesses''\} of the scientific paper from the review in a single, concise line, using two phrases separated by 'and'.

\smallskip\begin{small}\textbf{User Prompt}\end{small}

\{\texttt{Original Review Judgments}\}

\end{scriptsize}\end{tcolorbox}

\section{GEM / GEM-S with Fine-tuned Evaluation-LM}\label{app:gem-finetune}

We not only use the original \textsf{Llama-3.1-8B-Instruct} as the Evaluation-LM, but also fine-tune it to to better estimate the conditional and marginal probabilities. In this section, we provide the details of how we fine-tune the model, as well as the experimental results.

\subsection{Implementation}

We fine-tune the original \textsf{Llama-3.1-8B-Instruct} model using 1,000 papers and their corresponding reviews from ICLR 2023 (which are separate from the papers used for experiments). Let $\texttt{gen}(\text{metric}, x, y)$ represent the function that generates the full prompt for predicting judgments (both input and output), where $x=\texttt{Null}$ indicates that we are estimating the marginal probability. For each paper, we randomly selected three reviews, denoted as $r_1$, $r_2$, and $r_3$, along with their preprocessed versions $\hat{r}_1$, $\hat{r}_2$, and $\hat{r}_3$. From each paper, we generated nine data points for GEM and nine data points for GEM-S as follows: $\texttt{gen}(\texttt{GEM(-S)}, \hat{r}_1, \hat{r}_2)$,
$\texttt{gen}(\texttt{GEM(-S)}, \hat{r}_1, \hat{r}_3)$,
$\texttt{gen}(\texttt{GEM(-S)}, \hat{r}_2, \hat{r}_1)$,
$\texttt{gen}(\texttt{GEM(-S)}, \hat{r}_2, \hat{r}_3)$,
$\texttt{gen}(\texttt{GEM(-S)}, \hat{r}_3, \hat{r}_1)$,
$\texttt{gen}(\texttt{GEM(-S)}, \hat{r}_3, \hat{r}_2)$,\\
$\texttt{gen}(\texttt{GEM(-S)}, \texttt{Null}, \hat{r}_1)$,
$\texttt{gen}(\texttt{GEM(-S)}, \texttt{Null}, \hat{r}_2)$,
$\texttt{gen}(\texttt{GEM(-S)}, \texttt{Null}, \hat{r}_3)$.

In total, this process yield 18,000 input-output pairs. We used the Unsloth framework to fine-tune the model, running on a physical server equipped with a single NVIDIA A100 GPU, provided by Google Colab. Here is the settings we use for fine-tuning:

\begin{tcolorbox}[colback=gray!30, colframe=gray!20, frame hidden, listing only]
\begin{scriptsize}
\begin{lstlisting}
"model_config": {
    "base_model":"unsloth/Meta-Llama-3.1-8B-Instruct-bnb-4bit", # The base model
    "max_seq_length": 4096, # The maximum sequence length
    "load_in_4bit": True, # Load the model in 4-bit
},
"lora_config": {
    "r": 16, # The number of LoRA layers 8, 16, 32, 64
    "lora_alpha":16, # The alpha value for LoRA
    "lora_dropout":0, # The dropout value for LoRA
},
"training_config": {
    "per_device_train_batch_size": 2, # The batch size
    "gradient_accumulation_steps": 4, # The gradient accumulation steps
    "warmup_steps": 5, # The warmup steps
    "max_steps":0, # The maximum steps
    "num_train_epochs": 3, # The number of training epochs
    "learning_rate": 2e-4, # The learning rate
    "optim" :"adamw_8bit", # The optimizer
    "weight_decay" : 0.01,  # The weight decay
    "lr_scheduler_type": "linear", # The learning rate scheduler
    "seed" : 42, # The seed
}
\end{lstlisting}
\end{scriptsize}
\end{tcolorbox}

\subsection{Experimental Results}

\begin{table}[htbp]
\small\centering\renewcommand{\arraystretch}{0.85}
\begin{tabular}{cccccc}
\toprule
\textbf{Evaluation Metric} & \textbf{Sentence Deletion} & \textbf{Deletion \& Completion} & \textbf{Abstract-only Review} \\ 
\midrule
GEM-raw & \textcolor{OliveGreen}{\textbf{-0.123}} & \textcolor{OliveGreen}{\textbf{-0.126}} & 0.020 \\
& (-0.126,-0.120) & (-0.132,-0.121) & (0.017,0.023) \\
GEM & \textcolor{OliveGreen}{\textbf{-0.401}} & \textcolor{OliveGreen}{\textbf{-0.308}} & \textcolor{OliveGreen}{\textbf{-0.191}} \\
& (-0.448,-0.354) & (-0.358,-0.258) & (-0.261,-0.122) \\
GEM-S & \textcolor{OliveGreen}{\textbf{-0.409}} & \textcolor{OliveGreen}{\textbf{-0.206}} & \textcolor{OliveGreen}{\textbf{-0.566}} \\
& (-0.455,-0.362) & (-0.254,-0.158) & (-0.639,-0.492) \\
\midrule

GEM-finetune & \textcolor{OliveGreen}{\textbf{-0.155}} & \textcolor{OliveGreen}{\textbf{-0.135}} & -0.043 \\
& (-0.199,-0.110) & (-0.180,-0.090) & (-0.106,0.020) \\
GEM-S-finetune & \textcolor{OliveGreen}{\textbf{-0.192}} & \textcolor{OliveGreen}{\textbf{-0.099}} & \textcolor{OliveGreen}{\textbf{-0.355}} \\
& (-0.244,-0.140) & (-0.148,-0.049) & (-0.422,-0.288) \\
\bottomrule
\end{tabular}
\caption{Standardized Mean Differences (SMD) $d := (\mu' - \mu)/{\sigma_{\text{pooled}}}$ of scores after degradations of various evaluation metrics with 95\% confidence intervals in parentheses. Significant score decreases ($p<0.05$) after degradations are highlighted in bold green, implying the evaluation metric can \textcolor{OliveGreen}{\textbf{effectively}} penalize the degradation.}
\end{table}

\begin{table}[htbp]
\small\centering\renewcommand{\arraystretch}{0.85}
\begin{tabular}{cccccc}
\toprule
\textbf{Evaluation Metric} & \textbf{GPT-4o Rephrase} & \textbf{Llama3.1 Rephrase} & \textbf{Meaningless Elongation} \\ 
\midrule
GEM-raw & -0.123 & -0.126 & \textcolor{Red}{\textbf{0.020}} \\
& (-0.126,-0.120) & (-0.132,-0.121) & (0.017,0.023) \\
GEM & -0.058 & -0.107 & -0.063 \\
& (-0.090,-0.026) & (-0.143,-0.070) & (-0.097,-0.030) \\
GEM-S & -0.046 & -0.114 & -0.070 \\
& (-0.079,-0.013) & (-0.149,-0.078) & (-0.104,-0.036) \\
\midrule
GEM-finetune & -0.014 & -0.010 & -0.056 \\
& (-0.044,0.015) & (-0.044,0.024) & (-0.088,-0.025) \\
GEM-S-finetune & -0.026 & -0.045 & -0.074 \\
& (-0.062,0.010) & (-0.085,-0.004) & (-0.112,-0.037) \\
\bottomrule
\end{tabular}
\caption{Standardized Mean Differences (SMD) of scores $d := (\mu' - \mu)/{\sigma_{\text{pooled}}}$ after manipulations of various evaluation metrics with 95\% confidence intervals in parentheses. Significant score increase ($p<0.05$) after manipulations are highlighted in bold red, implying the evaluation metric are \textcolor{Red}{\textbf{not robust}} against the manipulation.}
\end{table}

\subsection{GRE-bench based on finetuned GEM and GEM-S}
\begin{figure}[htbp]
    \centering
    \includegraphics[width=1\linewidth]{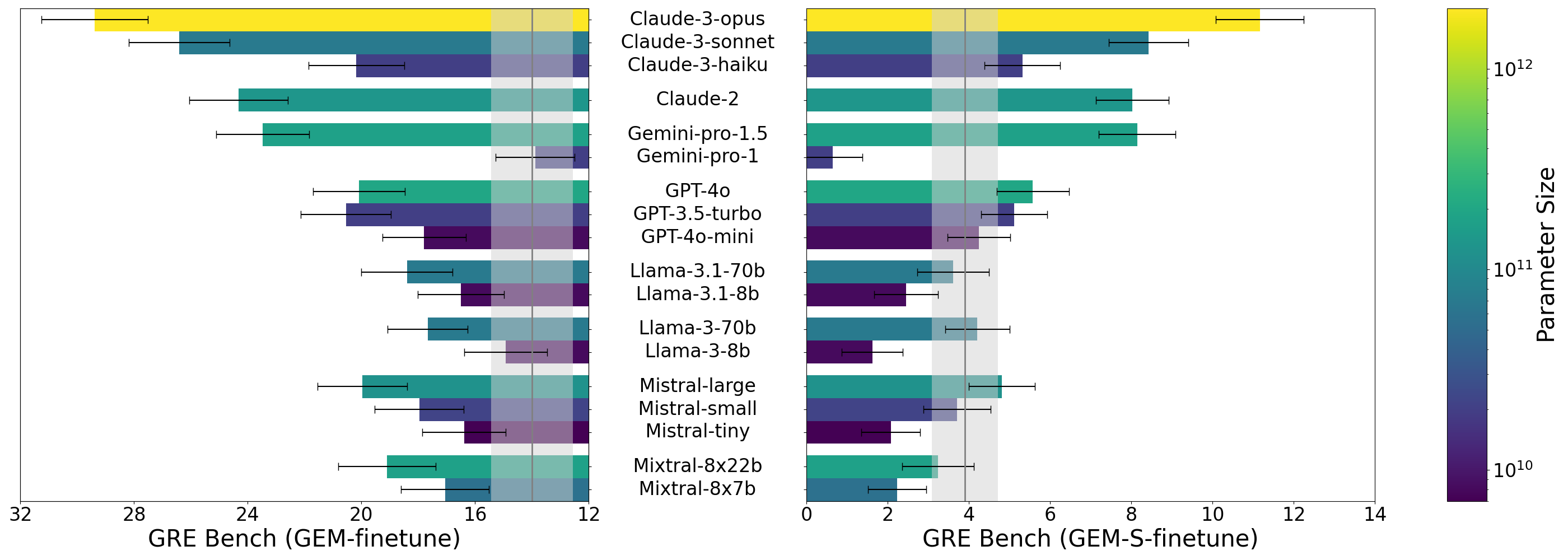}
    \caption{Results of GRE-bench based on finetuned GEM and GEM-S with 90\% confidence intervals vs. model parameter size. The grey line represents the average human baseline, with the 90\% confidence interval shaded in grey.}
    \label{fig:GRE-bench-ft}
\end{figure}

\end{document}